\newcommand{\inner}[3][\epsilon]{\ensuremath{\langle #2 , #3 \rangle_{#1}}}
\DeclareMathOperator*{\argmin}{\mathrm{argmin}}
\DeclareMathOperator*{\minimize}{\mathrm{minimize}}
\newtheorem{assumption}{Assumption}
\newcommand{\revzero}[1]{{\color{black}#1}}
\begin{document}

\title{Distributed Markov Chain Monte Carlo Sampling based on the Alternating Direction Method of Multipliers}

\author{\name Alexandros E. Tzikas \email alextzik@stanford.edu \\
       \addr Department of Aeronautics and Astronautics\\
       Stanford University\\
       Stanford, CA 94305, USA
       \AND
       \name Licio Romao \email licio@stanford.edu \\
       \addr Department of Aeronautics and Astronautics\\
       Stanford University\\
       Stanford, CA 94305, USA
       \AND
       \name Mert Pilanci \email pilanci@stanford.edu \\
       \addr Department of Electrical Engineering\\
       Stanford University\\
       Stanford, CA 94305, USA
       \AND
       \name Alessandro Abate \email aabate@cs.ox.ac.uk \\
       \addr Department of Computer Science\\
       University of Oxford\\
       Oxford,  OX1 3QD, United Kingdom
       \AND
       \name Mykel J. Kochenderfer \email mykel@stanford.edu \\
       \addr Department of Aeronautics and Astronautics\\
       Stanford University\\
       Stanford, CA 94305, USA
       }

\maketitle

\begin{abstract}%
\revzero{Many machine learning applications require operating on a spatially distributed dataset. Despite technological advances,
privacy considerations and communication constraints may prevent gathering the entire dataset in a central unit. In this paper, we propose a distributed sampling scheme based on the alternating direction method of multipliers, which is commonly used in the optimization literature due to its fast convergence. In contrast to distributed optimization, distributed sampling allows for uncertainty quantification in Bayesian inference tasks. We provide both theoretical guarantees of our algorithm's convergence and experimental evidence of its superiority to the state-of-the-art. For our theoretical results, we use convex optimization tools to  establish a fundamental inequality on the generated local sample iterates. This inequality enables us to show convergence of the distribution associated with these iterates to the underlying target distribution in Wasserstein distance. In simulation, we deploy our algorithm on linear and logistic regression tasks and illustrate its fast convergence compared to existing gradient-based methods.}
\end{abstract}

\begin{keywords}
  Markov chain Monte Carlo, distributed algorithms, sampling, alternating direction method of multipliers, proximal operator
\end{keywords}

\pagebreak

\section{Introduction}

\revzero{Since the 1950s, with the foundational works by \citet{dantzig1963linear} and later developments in the 1980s \citep{bertsekas2015parallel}, various research communities have recognized the importance of distributing computation to improve scalability. For example, the robotics community has explored multi-robot simultaneous localization and planning, multi-robot target tracking, and multi-robot task assignment \citep{shorinwa2023distributed1}.}
\revzero{The machine learning community has explored federated learning \citep{li2019convergence} and distributed training of neural networks \citep{yu2022dinno}.}

In order to select parameters in statistical models,  \revzero{optimization methods can be used to generate point estimates that aim at maximizing a given performance metric.} 
A different approach for selecting parameters relies on a Bayesian treatment, whose goal is to obtain samples from the posterior distribution on the parameter space \citep{andrieu2003introduction, sekkat2022large}.
\revzero{Sampling methods constitute an important module in such a Bayesian paradigm, because they allow us to retain a full probabilistic framework of the uncertainty affecting our statistical model. 
Such Bayesian approaches are usually employed to avoid overfitting \citep{bhar2023distributed,andrieu2003introduction} and perform uncertainty quantification \citep{bhar2023distributed,andrieu2003introduction}. Techniques based on Markov chain Monte Carlo (MCMC) \citep{andrieu2003introduction}, variational inference \citep{blei2017variational}, and Langevin dynamics \citep{welling2011bayesian} have been proposed to perform sampling.}

\revzero{In this paper, we seek a distributed sampling mechanism for two reasons. First, there are applications in which the available data is spatially distributed, and collecting such data in a central processing unit is not feasible due to privacy issues, communication constraints or simply the size of the data. Second, the agents that hold the local, private data are usually equipped with computational power, thus enabling local computations using a (small) subset of the entire dataset.}

\revzero{
Our proposed sampling scheme leverages the consensus alternating direction method of multipliers (ADMM), termed C-ADMM, presented by \citet{mateos2010distributed, shorinwa2023distributed1, shorinwa2023distributed2}, and \citet{shi2014linear}. A unique and crucial feature of the proposed sampling scheme, which we refer to as the distributed ADMM-based sampler (D-ADMMS), is the addition of a noise term in the proximal step of C-ADMM. As opposed to existing literature \citep{gurbuzbalaban2021decentralized} that relies on gradient computations for the parameter updates, the added noise and proximal updates of D-ADMMS are essential to its superior convergence behavior. In summary, our main contributions are as follows:
\begin{itemize}
    \item We show how to adapt C-ADMM to perform sampling in a distributed manner, by designing a new distributed sampling algorithm, which is termed D-ADMMS.
    \item We develop a new analysis to show convergence of the distribution of the generated iterates of D-ADMMS to the target distribution. 
    \item We study the performance of the proposed scheme on regression tasks and discuss advantages with respect to standard Langevin dynamics.
\end{itemize}
}

\revzero{
The remainder of the paper is structured as follows. In section $2$, we review related work. In section $3$, we formulate the problem, while in section $4$ we describe our proposed approach. In section $5$, we detail the convergence analysis of our proposed algorithm. Section $6$ contains the numerical experiments. Finally, we conclude with closing remarks and a discussion of future work in section $7$.}

\section{Related Work}

\textbf{Langevin and Hamiltonian Gradient MCMC.}
When the goal is to sample from a target distribution in Gibbs form ($\propto \exp{-U(x)}$), discretizations 
of stochastic differential equations (SDEs) \revzero{are an attractive sampling technique because the stationary distribution of these discretizations is usually close to the target distribution.} Langevin algorithms are MCMC methods based on the discretization of the overdamped Langevin diffusion, while Hamiltonian algorithms are MCMC methods based on the underdamped Langevin diffusion. Both Langevin and Hamiltonian methods scale well with high-dimensional sampling \revzero{spaces} \citep{kungurtsev2023decentralized}. They use first-order information of the target distribution to guide the dynamics towards the relevant regions of the parameter space. The stationary distribution of such algorithms, for example the unadjusted Langevin algorithm (ULA) \citep{parayil2021decentralized}, may contain a bias with respect to the stationary distribution of the underlying SDE. \revzero{To mitigate this bias, a Metropolis-Hastings correction can be introduced at the expense of increasing  computation.} Stochastic optimization tools were combined with Langevin dynamics by \cite{welling2011bayesian} to obtain a \textit{single-agent} MCMC algorithm that uses mini-batches of data at every iteration (to obtain \textit{gradient estimates}), along with a variable step-size and noise variance to guide convergence to the desired distribution.

\revzero{These methods} have recently been extended to the distributed setting, where the goal is to sample from a distribution, which is proportional to $\exp{-\sum_{i \in \mathcal{V}} f_i(x)}$, \revzero{in a network of a set $\mathcal{V}$ of agents}. The function $f_i$, for $i \in \mathcal{V}$, is only known to the corresponding agent $i$. Agents can communicate with their direct neighbors, as defined by the set of edges in the communication graph. \revzero{The methods that have been studied include: a modified version of \textit{distributed stochastic gradient descent} \citep{nedicdistributedopt}, namely distributed SGLD (D-SGLD) \citep{gurbuzbalaban2021decentralized}, and a method called \textit{distributed stochastic gradient Hamiltonian Monte Carlo} (D-SGHMC) \citep{gurbuzbalaban2021decentralized}, which is an adaptation of the SGHMC method to the distributed setting.} SGHMC can be faster \revzero{than} SGLD, because it is based on the discretization of the underdamped inertial Langevin diffusion, which converges to the stationary distribution faster than its overdamped counterpart due to a momentum-based accelerating step. \cite{gurbuzbalaban2021decentralized} provide convergence guarantees of the probability distribution of the local iterates of each agent to the target distribution in terms of Wasserstein distance. 

A \textit{distributed Hamiltonian Monte-Carlo} algorithm with a Metropolis acceptance step was recently derived \citep{kungurtsev2023decentralized}. Each agent estimates \textit{both first-order and second-order information} of the global potential function of the target distribution, \revzero{making this approach} different \revzero{from} ours. The ULA, \textit{which is gradient-based}, has also been modified for the \textit{distributed} case, giving rise to the D-ULA scheme \citep{parayil2021decentralized}. 
Assuming conditional independence among the data, the posterior is given as a product of local posteriors \revzero{that} are used to define the local dynamics. \textit{The distributed gradient-based ULA} \citep{parayil2021decentralized} has been modified to reduce the communication requirements between the agents by \cite{bhar2023distributed}. The difference is that the local iterate is not shared at every iteration but \revzero{only asynchronously by a triggering mechanism, which is based on the iterate's variation.}

It should be evident from this discussion that \textit{\revzero{m}ost distributed sampling algorithms are gradient-based.} \revzero{The optimization literature suggests that stochastic first-order methods are usually not the fastest to converge \citep{ryu2022large}, and} can be sensitive to the choice of hyperparameters \citep{toulis2021proximal}. At the same time, C-ADMM offers fast convergence in distributed optimization tasks \citep{shorinwa2023distributed2}. Motivated by this, we focus on developing an ADMM-inspired distributed sampling scheme.

\textbf{Distributed MCMC.}
In the case of large datasets, data is divided among agents. Sampling from the global posterior in this case can be done \revzero{using parallelized} MCMC. \revzero{\citet{neiswanger2014asymptotically} develop an MCMC sampling framework where each processing unit contains part of the dataset.} Each agent \revzero{deploys an MCMC method} independently (without communicating) \revzero{to sample from} the product term of the global posterior related to its dataset. \revzero{Then,} by appropriately combining these individual samples, samples from the global posterior can be obtained, in the spirit of a divide-and-conquer scheme. \textit{The combination procedure nevertheless requires a central coordinator.} An \revzero{alternative approach is to use several Markov chains.} 
\revzero{\citet{distributedMCMC_Welling} propose such a \textit{distributed sampling algorithm based on stochastic gradient Langevin dynamics (SGLD)}. However, \textit{it requires a central coordinator}.}

\textbf{MCMC and Federated Learning.}
\textit{The paradigm of SGLD has been used in the field of federated learning \citep{deng2022convergence} \revzero{to convert optimization algorithms into }sampling algorithms. }\revzero{In federated learning,} multiple \revzero{agents} aim to jointly optimize an objective without sharing their private data. The \revzero{agents} containing the private data can communicate with a centralized unit. 
\revzero{E}very iteration consists of a broadcast step, multiple local gradient steps by each \revzero{agent} using its local function, and finally a consensus step. Inspired by SGLD, the federated averaging optimization algorithm was modified into a \textit{gradient-based sampling algorithm by adding noise in the local gradient updates \citep{deng2022convergence}.} 
A combination of \textit{gradient-based Langevin dynamics} and compression techniques to reduce the communication cost has been studied \citep{karagulyan2023elf}. \textit{Nevertheless, the algorithm is not applicable in a distributed setting because it assumes a centralized processing unit. } 

A generalization of the federated averaging algorithm to compute the mode of the posterior distribution has been explored \citep{alshedivat2021federated}. \textit{The centralized processing unit performs gradient steps on a suitable objective, while the \revzero{agents} employ a variant of stochastic gradient MCMC in order to compute local covariances and expectations.} Finally, \textit{a distributed method} that minimizes the Kullback-Leibler (KL) divergence with the data likelihood function extends federated learning \citep{lalitha2019decentralized}. It consists of a local Bayesian update, \textit{a projection onto the allowed family of posteriors}, and a consensus step.

\textbf{Proximal Langevin Algorithms.} ADMM \revzero{has} a proximal update \revzero{at} each iteration. 
\revzero{T}he literature \revzero{suggests} that proximal operators are more stable than subgradients \citep{bauschke10convex}. 
\revzero{P}roximal optimization algorithms can be viewed as discretizations of gradient flow differential equations, whose equilibria are the minimizers of the considered function \citep{parikh2014proximal}.
Analogously, although common forms of Langevin MCMC methods employ subgradients, proximal MCMC methods possess favorable convergence and efficiency properties \citep{durmus2018efficient, salim2019stochastic, pereyra2016proximal}. While the classical ULA is based on a forward Euler approximation of the Langevin SDE that has the target distribution as the stationary distribution, proximal Langevin algorithms are based on discretizing an SDE whose stationary distribution equals the target distribution's Moreau approximation \citep{pereyra2016proximal}. The regularity properties of the Moreau approximation function lead to discrete approximations of the SDE with favourable stability and convergence qualities. To correct for the incurred error, a Metroplis-hastings accept-reject step has been proposed  \citep{pereyra2016proximal}. However, \textit{this algorithm requires knowing the full energy function of the desired distribution at every iteration and therefore it is not suitable for distributed computation.}

A proximal stochastic Langevin algorithm has been proposed  to sample from a distribution $\propto \exp{-U(x)}$ with $U(x) = F(x) + \sum_{i} G_i(x)$, where $F$ is smooth convex and $G_i$ are (possibly non-smooth) convex functions \citep{salim2019stochastic}. The authors assume that $F(x)$ and $G_i(x)$ can be written as expectations of functions $f(x, \xi)$ and $g_i(x, \xi)$, where $\xi$ \revzero{is} a random variable. This allows the use of stochastic information on $F(x)$ \revzero{and} $G_i(x)$ when designing the algorithm. At every iteration, a stochastic \textit{gradient} step is taken with respect to $F(x)$ and Gaussian noise is added. Then, stochastic proximal operators of each $G_i$ are deployed sequentially. \textit{Although our problem fits in this problem formulation, the algorithm by \cite{salim2019stochastic} is not suitable for deployment in a distributed network setting, \revzero{as} it requires the sequential deployment of the proximal operators. }

\textbf{Connections between ADMM, Sampling, and SDEs.}
ADMM is an optimization method that has been developed to combine the robustness and convergence of the method of multipliers and the decomposability of dual ascent \citep{boyd2011distributed}. 
\citet{vono2019split} propose fast and efficient variations of the Gibbs sampler, based on the idea of variable splitting and variable augmentation, in order to sample from $\exp{-\sum_i f_i(x)}$. The methods employ surrogate distributions, which converge in the limit to the target distribution, and consist of sequentially sampling a variable from its conditional distribution on the remaining ones. If\revzero{,} instead of sampling, we perform MAP optimization at each step of their proposed algorithm, we recover the ADMM optimization method. \textit{Although this method is closely related to our proposed scheme, it requires centralized communication at every step, as \revzero{discussed in} \citet[appendix B]{vono2019split}}. Other distributed sampling methods have been proposed that are inspired by ADMM and the variable-splitting idea in optimization. \cite{rendell2020global} introduce auxiliary variables and construct an MCMC algorithm on an extended state space. \revzero{Their algorithm} can be partly deployed in a distributed manner among agents, because the auxiliary variables are conditionally independent given the variable of interest. The auxiliary variables can be independently sampled at each agent given the variable of interest. \textit{The algorithm requires a centralized machine to sample the variable of interest because the method consists of \revzero{alternating between} sampling the agent auxiliary variables given the variable of interest and vice versa.} An instance of this approach, the split Gibbs sampler, has been studied by \citet{vono2022efficient}.

Connections between ADMM and SDEs are made for the case of a stochastic optimization problem where the sum of a function $g(x)$ and the expected value of another function $f(x, \xi)$, where $\xi$ is a random variable, is to be minimized \citep{zhou2020stochastic}. At every iteration, the primal variables are updated using a random sample of $f$, namely $f(x, \xi_i)$. \cite{zhou2020stochastic} prove that the primal iterate converges as the step size decreases, \revzero{in some sense}, to the stochastic process satisfying a given SDE. \textit{Different from our formulation, the only noise in the setting of \citet{zhou2020stochastic} stems from the noisy sample of $f$at every iteration.}

\section{Problem Formulation}\label{sec:problem_form}

We consider a network of agents. Each agent $i$ possesses a local function $f_i(x)$, where \revzero{$x \in \mathbb{R}^d$}. In machine learning applications, $f_i(x)$ could pertain to the loss, as a function of the model's parameter $x$, on the agent $i$'s dataset. As such, $f_i(x)$ is considered unknown to all agents other than agent $i$. This is due to privacy considerations and the high cost of transmitting agent $i$'s data across the network. The overall goal is to sample in a distributed manner from the distribution
\begin{equation} \label{eq: target_distribution}
    \mu^*(x) \propto \exp{-F(x)},\ F(x) = \sum_{i \in \mathcal{V}} f_i(x).
\end{equation}
Such a log-concave function arises as the posterior distribution in various Bayesian inference problems, such as distributed Bayesian linear and logistic regression \citep{gurbuzbalaban2021decentralized}.

The communication topology of the network is characterized by an undirected graph  $\mathcal{G} = \left( \mathcal{V}, \mathcal{E} \right)$, where  $\mathcal{V} = \lbrace 1, \dots, N \rbrace$, for some integer $N$, is the set of agents, and $\mathcal{E} \subset \mathcal{V} \times \mathcal{V}$ is the set of communication links, i.e., $(i, j) \in \mathcal{E}$ if and only if $i \neq j$ and node $i$ can communicate directly with node $j$. 
The neighborhood of agent $i$ is denoted $\mathcal{N}_i = \lbrace j \mid (i, j) \in \mathcal{E}\rbrace$. The cardinality of $\mathcal{N}_i$ is denoted $N_i$. 
Complementary to the undirected graph $\mathcal{G}$, we also describe the existing communication topology via a directed graph, $\mathcal{G}_d = \left( \mathcal{V}, \mathcal{A} \right)$. Every edge $e \in \mathcal{E}$ is associated with two directed links in $\mathcal{A}$ that connect the same nodes as $e$. Therefore the cardinality of $\mathcal{A}$ is twice that of $\mathcal{E}$, i.e., $\lvert \mathcal{A}\rvert = 2\lvert \mathcal{E} \rvert$, and $\mathcal{G}_d$ describes the same topology as $\mathcal{G}$.

\section{Description of the Proposed Method}
In this section, we introduce our proposed method, D-ADMMS. We start by providing background material on distributed optimization and C-ADMM. We then describe how we modify C-ADMM, which is used for distributed optimization, in order to obtain D-ADMMS, which performs distributed sampling.

\subsection{Background on C-ADMM for Distributed Optimization}

In distributed optimization problems, we consider the set-up introduced in section \ref{sec:problem_form}. However, in distributed optimization we aim to solve the optimization problem
\begin{equation}\label{eq:distr_opt}
    \minimize_{x \in \mathbb{R}^d} \quad \sum_{i \in \mathcal{V}} f_i(x),
\end{equation}
instead of sample from eq. \eqref{eq: target_distribution}.
We may introduce a local optimization variable, $x_i$ for each agent $i$, and consensus constraints in order to obtain the optimization problem
\begin{equation}\label{eq:distr_opt_2}
\begin{aligned}
    \minimize_{\substack{\lbrace x_i \rbrace_{i \in \mathcal{V}},\\ \lbrace z_{i,j} \rbrace_{(i,j) \in \mathcal{E}}}} \quad  & \sum_{i \in \mathcal{V}} f_i(x_i) \\
    \mathrm{subject\ to} \quad  &x_i =x_j\ \forall (i,j) \in \mathcal{E}.\\
\end{aligned}
\end{equation}
If $\mathcal{G}$ is connected, $x^*$ is an optimal point of problem \eqref{eq:distr_opt} if and only if $x_i = x^*,\ \forall i \in \mathcal{V},$ is an optimal point of problem \eqref{eq:distr_opt_2}. Problem \eqref{eq:distr_opt_2} lends itself to a distributed treatment.

C-ADMM is a distributed optimization algorithm inspired by the method of multipliers, which computes a primal-dual solution pair for the optimization problem via the augmented Lagrangian: the primal variables $x_i$ are updated as the minimizers of the augmented Lagrangian and the dual variables are updated via (dual) gradient ascent on the augmented Lagrangian \cite{shorinwa2023distributed2}. C-ADMM introduces auxiliary optimization variables to problem \eqref{eq:distr_opt_2} for each consensus constraint, which allows for distributed update steps. At step $(k+1)$, the primal variable of agent $i$, $x_i^{(k+1)}$, is updated according to
\begin{equation}
    x_i^{(k+1)} \gets \argmin_{x} \left\{ f_i(x) +
     p_i^{{(k)}^T} x+ \rho \sum_{j \in \mathcal{N}_i} \left\| x - \frac{x_i^{(k)}+x_j^{(k)}}{2} \right\|_2^2 \right\},
\end{equation}
while the dual variable of agent $i$ at step $(k+1)$, $p_i^{(k+1)}$, which corresponds to the consensus constraints involving agent $i$ and its neighbors, is updated according to
\begin{equation}
    p_i^{(k+1)} \gets p_i^{(k)} + \rho \sum_{j \in \mathcal{N}_i} \left(x_i^{(k+1)} - x_j^{(k+1)}\right),
\end{equation}
with initialization at zero.

\subsection{The Proposed Method: D-ADMMS}

Our distributed MCMC algorithm, D-ADMMS, is a modified version of the C-ADMM optimization algorithm \citep{shorinwa2023distributed2}. In constrast to C-ADMM, which is a distributed optimization algorithm, D-ADMMS is a distributed sampling algorithm. D-ADMMS is given in Algorithm \ref{alg:proposed}. \revzero{A key feature of the proposed sampling scheme} is the added noise in the update of the primal variables. The MCMC sample corresponds to the local primal variable $x_i^{(k)}$. At every iteration, each agent updates its local primal iterate by solving a proximal problem. The primal update of D-ADMMS 
can equivalently be written as
\begin{equation}
    x_i^{(k+1)} = \mathrm{prox}_{\gamma_i f_i} \Bigg \{ \sum_{j \in \mathcal{N}_i} \frac{x_i^{(k)}+x_j^{(k)}}{2N_i} + \dfrac{\sqrt{2}}{2\rho}w_i^{(k+1)} + \dfrac{p_i^{(k)}}{2\rho N_i} \bigg \},
\end{equation}
where $\gamma_i = {2}/\left({\rho N_i}\right)$. 
Each agent then communicates its primal variable to its neighbors and updates its dual variable $p_i^{(k)}$ based on the disagreement of the primal variables of the neighboring agents. 
\revzero{The inspiration of the added noise in the proximal step is derived from the algorithm by \citet{salim2019stochastic}.}
The first step at each iteration of the algorithm by \citet{salim2019stochastic} consists of a noiseless gradient step and then a proximal update with added noise. In D-ADMMS, the noiseless gradient step corresponds to the update of the dual variables, while the primal variables are updated with a noisy proximal step. The scaling of the noise involved is however different between the two algorithms. 

\SetKw{Init}{Initialization:}{}{}
\SetKw{Output}{Output:}{}{}
\SetKw{Pars}{Parameters:}{}{}
\SetKwProg{l}{do in parallel}{}{}
\begin{algorithm}
\caption{Proposed Algorithm (D-ADMMS)}\label{alg:proposed}
\Init{$k \leftarrow 0,\ x_i^{(k)} \in \mathbb{R}^d, p_i^{(k)} = \mathbf{0}\ \forall i \in \mathcal{V}$}

\Pars{$\rho >0$}

\Output{samples $x_i^{(k+1)}\ \forall i \in \mathcal{V}$}

\l{$\forall i \in \mathcal{V}$}{

$w_i^{(k+1)} \sim \mathcal{N}(0, I)$

$x_i^{(k+1)} \gets \argmin_{x} \left\{ f_i(x) +
     p_i^{{(k)}^T} x+ \rho \sum_{j \in \mathcal{N}_i} \left\| x - \frac{x_i^{(k)}+x_j^{(k)}}{2} + \dfrac{\sqrt{2}}{2\rho}w_i^{(k+1)}\right\|_2^2 \right\}$

Communicate $x_i^{(k+1)}$ to neighbors $j \in \mathcal{N}_i$

Receive $x_j^{(k+1)}$ from neighbors $j \in \mathcal{N}_i$

$
    p_i^{(k+1)} \gets p_i^{(k)} + \rho \sum_{j \in \mathcal{N}_i} \left(x_i^{(k+1)} - x_j^{(k+1)}\right)
$

$k \gets k+1$
}
\end{algorithm}

\section{Theoretical Analysis of the Proposed Method}

We study the convergence of the distribution associated with the primal iterates, $x_i^{(k)}$, of the proposed algorithm, D-ADMMS, to the target distribution $\mu^*(x)$, in terms of $2$-Wasserstein distance. The $2$-Wasserstein distance between two probability measures $\mu$ and $\nu$ with finite second moments is defined as 
\begin{equation}
    W(\mu, \nu) = \left( \inf_{\tau \in \Gamma(\mu, \nu)} \mathbb{E}_{(x,y)\sim \tau}\ \lVert x-y \rVert^2 \right)^{1/2},
\end{equation}
where $\Gamma(\mu, \nu)$ is the set of all couplings between $\mu$ and $\nu$ \revzero{\citep{villani2009wasserstein}}.
We adopt a convex analysis perspective. We base our analysis of the convergence rate (with respect to the iterates' Wasserstein distance to the target distribution) of D-ADMMS on the analysis of the convergence rate (with respect to the iterates' Euclidian distance to the optimal point) of C-ADMM by \citet{shi2014linear}. Modifying the analysis by \citet{shi2014linear} for our purposes is not trivial for two reasons: i) the added noise in the primal update of D-ADMMS gives rise to terms that do not exist in C-ADMM, and ii) it is not straight forward how to obtain a Wasserstein distance bound on the distribution of the iterates from a Euclidian distance bound of the iterates. We use $\lVert \cdot \rVert$ for the standard Euclidean norm and $\lVert \cdot \rVert_G^2$ for the $G$-matrix norm $(\cdot)^T G (\cdot)$.

This section is organized as follows. We start the first subsection by stating our assumptions and introducing helpful quantities. We finish it with a lemma that shows the equivalence of the D-ADMMS updates and a different set of updates, which involve the same primal variables. This second set of updates is used in our analysis. In the second subsection, we prove a recursive inequality for the Wasserstein distance between the distribution of the primal iterates of D-ADMMS and the target distribution of eq. \eqref{eq: target_distribution}. Our main result is Theorem 3, which is included in the third subsection. Theorem 3 states that there exists a decreasing upper-bound on the Wasserstein distance between the distribution of the primal iterates of D-ADMMS and the target distribution of eq. \eqref{eq: target_distribution}, as the iterations evolve.

\subsection{Assumptions, Definitions, and an Equivalent Expression of D-ADMMS}
\begin{assumption}
The local objective functions $f_i(x)$ are strongly convex: $\forall x_a, x_b \in \mathbb{R}^d, \forall i \in \mathcal{V}$, it holds that
\begin{equation*}
    \langle \nabla f_i(x_a) - \nabla f_i(x_b), x_a -x_b \rangle \geq m_{f_i} \lVert x_a -x_b\rVert^2,\ m_{f_i} >0.
\end{equation*}
\end{assumption}
\begin{assumption}\label{ass:Lipsch}
The gradients of the local objective functions are Lipschitz continuous: $\forall x_a, x_b \in \mathbb{R}^d, \forall i \in \mathcal{V}$, it holds that
\begin{equation*}
    \lVert \nabla f_i(x_a) - \nabla f_i(x_b)\rVert \leq M_{f_i} \lVert x_a -x_b\rVert,\ M_{f_i} >0.
\end{equation*}
\end{assumption}
\begin{assumption}
    The graph topology $\mathcal{G}$ is connected.
\end{assumption}

We further define the consensus convex optimization problem associated with $\mu^*$ as
\begin{equation}\label{eq:opt_problem}
\begin{aligned}
    \minimize_{\substack{\lbrace x_i \rbrace_{i \in \mathcal{V}},\\ \lbrace z_{i,j} \rbrace_{(i,j) \in \mathcal{A}}}} \quad  & \sum_{i \in \mathcal{V}} f_i(x_i) \\
    \mathrm{subject\ to} \quad  &x_i = z_{i,j},\ x_j = z_{i,j}\ \forall (i,j) \in \mathcal{A}.\\
\end{aligned}
\end{equation}
Concatenating each $x_i$ in $X \in \mathbb{R}^{Nd}$ and all $z_{i,j}$ in $Z \in \mathbb{R}^{\lvert \mathcal{A} \rvert d}$, we may write the constraint of eq. (\ref{eq:opt_problem}) as
\begin{equation}
    AX+BZ = 0.
\end{equation}
Here, $A = \left[A_1; A_2 \right]$, where $A_1, A_2 \in \mathbb{R}^{\lvert \mathcal{A} \rvert d \times N d}$. If $(i,j) \in \mathcal{A}$ and $z_{i,j}$ is the $q$-th block of $Z$, then the $(q,i)$ block of $A_1$ and the $(q,j)$ block of $A_2$ are the the $d \times d$ identity matrices. All other blocks of $A_1, A_2$ contain zero entries. Also $B = \left[ -I_{\lvert \mathcal{A}\rvert d}; -I_{\lvert \mathcal{A}\rvert d}\right]$, where $I_{\lvert \mathcal{A}\rvert d}$ is the $\lvert \mathcal{A}\rvert d \times \lvert \mathcal{A}\rvert d$ identity matrix. We define $f(X) = \sum_{i \in \mathcal{V}} f_i(x_i)$. By Assumptions 1 and 2, $f(X)$ is strongly convex with constant $m_f = \min_i m_{f_i}$ and its gradients are Lipschitz continuous with constant $M_f = \max_i M_{f_i}$. 
\begin{assumption}
    $\sum_{i \in \mathcal{V}} f_i(x)$ admits a (unique) minimizer.
\end{assumption}

Problem (\ref{eq:opt_problem}) then admits a unique solution $(X^*, Z^*)$, because $\mathcal{G}$ is connected and $\sum_{i \in \mathcal{V}} f_i(x)$ admits a unique minimizer. It is evident that the optimization problem (\ref{eq:opt_problem}) is related to our problem of interest, which is to sample from distribution \eqref{eq: target_distribution} in a distributed manner, but solving problem (\ref{eq:opt_problem}) is not our objective. In our analysis, we use the minimizer of problem (\ref{eq:opt_problem}) as a fixed point to which the Euclidian distance of the primal iterates of D-ADMMS can be bounded. By assigning a point mass distribution to this fixed point, we are then able to obtain relations for the Wasserstein distance of the primal iterates of D-ADMMS to the target distribution.

We now introduce matrices $M_{-}, M_{+}, L_{-}, L_{+}$, and $D$, based on the network topology $\mathcal{G}$. $M_{+}$ and $M_{-}$ are the extended unoriented and oriented incidence matrices of $\mathcal{G}_d$, respectively. $L_{+}$ and $L_{-}$ are the extended signless and signed Laplacian matrices of $\mathcal{G}$, respectively. $D$ is the extended degree matrix of $\mathcal{G}$. By ``extended'' we mean the Kronecker product with $I_d$. We also denote $w^{(k)} = (w_1^{(k)}, \dots, w_N^{(k)}) \in \mathbb{R}^{Nd}$, and $p^{(k)} = (p_1^{(k)}, \dots, p_N^{(k)})\in \mathbb{R}^{Nd}$.

\begin{lemma}
Define $\beta \in \mathbb{R}^{\lvert \mathcal{A} \rvert d}$. The update equations of D-ADMMS in Algorithm \ref{alg:proposed} can be derived from the iterates
\begin{equation}\label{eq:noisy_primal}
    \nabla f(X^{(k+1)}) + M_{-}\beta^{(k+1)} +\sqrt{2} D w^{(k+1)} = \rho M_{+} \left(Z^{(k)} - Z^{(k+1)} \right),
\end{equation}
\vspace{-0.35cm}
\begin{equation}\label{eq:beta}
    \beta^{(k+1)} - \beta^{(k)} - \frac{\rho}{2} M_{-}^T X^{(k+1)} = 0,
\end{equation}
\vspace{-0.35cm}
\begin{equation}
    \frac{1}{2} M_{+}^T X^{(k)} - Z^{(k)} = 0, 
\end{equation}
where $X^{(k)}$ is the concatenation of the $x_i^{(k)}$ from Algorithm \ref{alg:proposed}.
\end{lemma}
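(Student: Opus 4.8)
The plan is to show that the three displayed relations are nothing but the agent-wise updates of Algorithm~\ref{alg:proposed} re-expressed in the compact incidence-matrix notation, following the reduction of C-ADMM to a Laplacian-based iteration used by \citet{shi2014linear}, the single new ingredient being the noise term. I would first write the first-order optimality condition of the $\argmin$ in Algorithm~\ref{alg:proposed}. For agent $i$ this reads
\begin{equation*}
\nabla f_i(x_i^{(k+1)}) + p_i^{(k)} + 2\rho\sum_{j\in\mathcal N_i}\left(x_i^{(k+1)} - \frac{x_i^{(k)}+x_j^{(k)}}{2}\right) + \sqrt2\,N_i\,w_i^{(k+1)} = 0,
\end{equation*}
where the noise contributes $2\rho\sum_{j\in\mathcal N_i}\frac{\sqrt2}{2\rho}w_i^{(k+1)} = \sqrt2\,N_i w_i^{(k+1)}$. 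Introducing the auxiliary variables $z_{i,j}^{(k)} = (x_i^{(k)}+x_j^{(k)})/2$, which is exactly the third displayed relation $\tfrac12 M_+^T X^{(k)} = Z^{(k)}$ written arc-wise, lets me replace the averaged neighbor terms by $Z^{(k)}$.

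The crucial algebraic manipulation is to re-index the dual variable. Splitting $N_i x_i^{(k+1)} = \sum_{j\in\mathcal N_i} z_{i,j}^{(k+1)} + \tfrac12\sum_{j\in\mathcal N_i}(x_i^{(k+1)}-x_j^{(k+1)})$ and recognizing, from the dual update of Algorithm~\ref{alg:proposed}, that $\rho\sum_{j\in\mathcal N_i}(x_i^{(k+1)}-x_j^{(k+1)}) = p_i^{(k+1)} - p_i^{(k)}$, the optimality condition collapses to $\nabla f_i(x_i^{(k+1)}) + p_i^{(k+1)} + \sqrt2\,N_i w_i^{(k+1)} = 2\rho\sum_{j\in\mathcal N_i}(z_{i,j}^{(k)}-z_{i,j}^{(k+1)})$. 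I would then stack these agent-wise identities and use the graph identities $L_\pm = \tfrac12 M_\pm M_\pm^T$, $D = \tfrac12(L_+ + L_-)$, and $M_+ M_-^T = 0$ to pass to matrix form: the stacked noise becomes $\sqrt2\,D w^{(k+1)}$ since the $i$-th block of $D$ is $N_i I_d$, and the stacked neighbor differences of $Z$ become $\rho M_+(Z^{(k)}-Z^{(k+1)})$. For the dual, the increment $p^{(k+1)} - p^{(k)} = \rho L_- X^{(k+1)} = \tfrac{\rho}{2}M_- M_-^T X^{(k+1)}$ shows, by induction from the zero initialization $p^{(0)}=0$, that $p^{(k)}$ always lies in the range of $M_-$; defining $\beta^{(k)}$ through $p^{(k)} = M_-\beta^{(k)}$ and the increments $\beta^{(k+1)}-\beta^{(k)} = \tfrac{\rho}{2}M_-^T X^{(k+1)}$ yields \eqref{eq:beta} and turns $p^{(k+1)}$ into $M_-\beta^{(k+1)}$, producing \eqref{eq:noisy_primal}. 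Reversing these substitutions recovers the per-agent updates, so the two systems are equivalent.

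The main obstacle is the dual reduction together with the consistent bookkeeping of the extended incidence and Laplacian matrices. One must verify, by induction on $k$ and using the zero initialization, that the two dual blocks associated with the pair of arcs on each undirected edge stay equal and opposite, so that the full multiplier of problem~\eqref{eq:opt_problem} collapses onto the single variable $\beta$ living in the range of $M_-$, and that the paired auxiliary variables satisfy $z_{i,j}^{(k)}=z_{j,i}^{(k)}$ so that $Z^{(k)}=\tfrac12 M_+^T X^{(k)}$ persists for every $k$; the orthogonality $M_+ M_-^T = 0$ and the factor of two arising from $\lvert\mathcal A\rvert = 2\lvert\mathcal E\rvert$ must be tracked carefully here. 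By contrast, carrying the noise through the argument is routine: $w^{(k+1)}$ enters only the primal subproblem and does not interact with the $Z$- or dual-updates, so the structural relations inherited from \citet{shi2014linear} are untouched and the derivation simply acquires the extra additive term $\sqrt2\,D w^{(k+1)}$ in \eqref{eq:noisy_primal}.
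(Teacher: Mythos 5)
Your proposal is correct and follows essentially the same route as the paper's proof: both rest on the identities $Z^{(k)}=\tfrac12 M_+^T X^{(k)}$, $p^{(k)}=M_-\beta^{(k)}$, $L_{\pm}=\tfrac12 M_{\pm}M_{\pm}^T$, and $D=\tfrac12(L_++L_-)$ to pass between the matrix-form iterates and the per-agent updates, the noise entering only as the additive $\sqrt2\,D w^{(k+1)}$ term. The only difference is direction and granularity --- you start from the agent-wise optimality conditions and stack, while the paper starts from eq.~(10--12) and reduces to the Laplacian form before matching the algorithm ``by inspection'' --- but since every substitution is an equivalence, the two arguments are interchangeable.
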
 
\begin{proof}
    The proof is included in Appendix \ref{sec:proof_lemma1}.
\end{proof}
By the lemma above, we may analyze the primal iterates of D-ADMMS using eq. (10-12).

The Karush-Kuhn-Tucker (KKT) conditions for problem (\ref{eq:opt_problem}) are
\begin{equation}
    \nabla f(X^*) + M_{-} \beta^* = 0,
\end{equation}
\vspace{-0.4cm}
\begin{equation}
    M_{-}^T X^* = 0,
\end{equation}
\vspace{-0.4cm}
\begin{equation}
    \frac{1}{2}M_{+}^TX^*-Z^* = 0,
\end{equation}
as described by \citet{shi2014linear}, where $\beta^*$ denotes the unique optimal multiplier that exists in the column space of $M_{-}^T$. Since the equality constraints of problem (\ref{eq:opt_problem}) are feasible, by Slater's condition \citep{boyd2004convex}, there exists an optimal multiplier $\Tilde{\beta}$ that satisfies the KKT conditions. Its projection onto the column space of $M_{-}^T$ is $\beta^*$, as analyzed by \citet{shi2014linear}. 

\begin{assumption}
    $\beta^{(0)}$ is in the column space of $M_{-}^T$.
\end{assumption}
\revzero{By inspection of eq. \eqref{eq:beta}, we observe that under Assumption 5, $\beta^{(k)}$ is in the column space of $M_{-}^T$ for all $k \geq 0$.}

\subsection{A Recursive Inequality of Convergence for D-ADMMS}
We define $U = (Z, \beta)$, $G = \mathrm{diag}\{\rho I_{2 \mid \mathcal{E} \mid d},\ \frac{1}{\rho} I_{2 \mid \mathcal{E} \mid d}\}$, and $U^* = (Z^*, \beta^*)$. We also denote the largest singular value and the smallest nonzero singular value of a matrix $M$, as $\sigma_{\mathrm{max}}(M)$, and $\sigma_{\mathrm{min}}(M)$ respectively.
We may now compute recursive bounds on $U^{(k)}$ and $X^{(k)}$. Then we can obtain recursive bounds on the Wasserstein distance of the iterate. We denote $\mu_{(\cdot)}$ as the probability distribution of random variable $(\cdot)$. Also $\boldsymbol{\mu}^*(X) = \prod_{i=1}^N \mu^*(x_i)$. Finally $W^2_G(\mu_{U^{(k)}}, \mu_{U^*}) = \rho W^2(\mu_{Z^{(k)}}, \mu_{Z^*}) + (1/\rho)W^2(\mu_{\beta^{(k)}}, \mu_{\beta^*}) $.

\begin{lemma}
    Under Assumptions (1-5), for any $\kappa > 1$, there exists a $\delta>0$, such that the distribution of the iterates of D-ADMMS satisfies the relation
    \begin{multline}
        W(\mu_{X^{(k+1)}}, \boldsymbol{\mu}^*) \leq \frac{1}{\sqrt{m_f}} W_G(\mu_{U^{(k)}}, \mu_{U^*}) + \\
        \frac{1}{\sqrt{2}m_f}\sqrt{\mathbb{E}\left( \lVert D w^{(k+1)} \rVert^2\right)} +
    W( \mu_{X^* - \frac{1}{\sqrt{2}m_f} D w^{(k+1)}},\boldsymbol{\mu}^*),
    \end{multline}
    and $W_G(\mu_{U^{(k)}}, \mu_{U^*})$ is recursively upper-bounded by
\begin{equation}\label{eq:lemma2_14_main}
    W_G(\mu_{U^{(k+1)}}, \mu_{U^*}) \leq \sqrt{a} W_G(\mu_{U^{(k)}}, \mu_{U^*}) +\frac{\mathbb{E}\left(y^{(k+1)}\right)}{2\sqrt{a}} + \sqrt{\mid\mathbb{E}\left( r^{(k+1)} \right) - \left( \frac{\mathbb{E}\left(y^{(k+1)}\right)}{2\sqrt{a}}\right)^2 \mid},
\end{equation}
where
\begin{equation}
   y^{(k+1)} = 2\frac{b}{\sqrt{m_f}} \Bar{w}^{(k+1)} +c \sigma_\mathrm{max}(M_{-}) \sqrt{\rho}\lVert D w^{(k+1)} \rVert + \frac{cd}{\sqrt{m_f}}\lVert D w^{(k+1)} \rVert,
\end{equation}
\begin{multline}
    r^{(k+1)} =  \frac{\sqrt{2}b}{m_f} \Bar{w}^{(k+1)} \lVert D w^{(k+1)} \rVert + b \left(\Bar{w}^{(k+1)}\right)^2 + \frac{b}{2m_f^2}\lVert D w^{(k+1)} \rVert^2 \\
    +\frac{\sqrt{2}cd}{m_f}\lVert D w^{(k+1)} \rVert^2  - e \lVert D w^{(k+1)} \rVert^2,
\end{multline}
\vspace{-0.3in}
\begin{multline}
    \Bar{w}^{(k+1)}=\lVert \frac{1}{\sqrt{2}m_f} D w^{(k+1)}\rVert + \lVert \sqrt{2} D w^{(k+1)} \rVert,\ \\
    \delta = \min \Biggl \{ \frac{(\kappa -1) \sigma^2_{\mathrm{min}}(M_{-})}{\kappa \sigma^2_{\mathrm{max}}(M_{+})}, \frac{m_f}{\frac{\rho}{4} \sigma^2_{\mathrm{max}}(M_{+}) + \frac{\kappa M_f^2}{\rho \sigma^2_{\mathrm{min}}(M_{-})} }\Biggr \}>0,
\end{multline}
and
\begin{multline}\label{eq:theorem_end}
    a =\frac{2m_f+1}{2m_f(1+\delta)},\ b=\frac{1}{2(1+\delta)},\ c=\frac{2\sqrt{2}\delta}{(1+\delta)\sigma^2_{\mathrm{min}}(M_{-})},\\
    d=\frac{\rho \sigma_\mathrm{max}^2(M_{-})}{2},\ e=\frac{2\delta}{(1+\delta)\rho \sigma^2_{\mathrm{min}}(M_{-})}.
\end{multline}
\end{lemma}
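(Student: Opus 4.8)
The plan is to adapt the linear-convergence argument for C-ADMM of \citet{shi2014linear} to the noisy iterates \eqref{eq:noisy_primal}--(12), carefully tracking the extra terms generated by $\sqrt{2}Dw^{(k+1)}$, and then to convert the resulting per-realization Euclidean estimates into Wasserstein estimates. The key structural fact I would exploit is that $\mu_{U^*}$ and $\mu_{X^*}$ are point masses, so $W_G(\mu_{U^{(k)}},\mu_{U^*}) = \sqrt{\mathbb{E}\lVert U^{(k)}-U^*\rVert_G^2}$, which reduces all distances to the fixed optimizer to plain $L^2$ norms of the deviations.

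First I would subtract the KKT conditions (13)--(15) from the equivalent updates \eqref{eq:noisy_primal}, \eqref{eq:beta} and the third relation of Lemma 1, obtaining deviation equations for $X^{(k+1)}-X^*$, $\beta^{(k+1)}-\beta^*$ and $Z^{(k+1)}-Z^*$; the only departure from the noiseless setting is the additive $\sqrt{2}Dw^{(k+1)}$ in the analogue of \eqref{eq:noisy_primal}. I would then invoke the telescoping identity
\[
\lVert U^{(k)}-U^*\rVert_G^2 - \lVert U^{(k+1)}-U^*\rVert_G^2 = 2\langle U^{(k+1)}-U^*,\, G(U^{(k)}-U^{(k+1)})\rangle + \lVert U^{(k)}-U^{(k+1)}\rVert_G^2,
\]
and, as in \citet{shi2014linear}, lower-bound the cross term using the deviation equations together with strong convexity (which supplies the $m_f\lVert X^{(k+1)}-X^*\rVert^2$ term) and Lipschitz continuity of $\nabla f$ (which supplies the $M_f$ upper bounds). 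The free parameter $\kappa>1$ enters exactly as in \citet{shi2014linear}: a Young-type split of the term involving $M_{-}(\beta^{(k+1)}-\beta^*)$ produces the two competing requirements whose balance yields precisely the minimum defining $\delta$, and hence the contraction constant $a$.

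The crux of the adaptation is to isolate the noise. Wherever the noiseless proof used \eqref{eq:noisy_primal}, the term $\sqrt{2}Dw^{(k+1)}$ now contributes inner products pairing $Dw^{(k+1)}$ with $X^{(k+1)}-X^*$ and with $M_{-}(\beta^{(k+1)}-\beta^*)$, which I would bound by Cauchy--Schwarz and the singular-value estimates $\sigma_{\mathrm{max}}(M_{-})$, as well as a pure quadratic $\lVert Dw^{(k+1)}\rVert^2$. Collecting the contributions that remain multiplied by a factor of order $\lVert U^{(k)}-U^*\rVert_G$ (through $\Bar{w}^{(k+1)}$ and the constants $b,c,d$) into $y^{(k+1)}$, and the purely noise-dependent remainder (including the dissipative $-e\lVert Dw^{(k+1)}\rVert^2$) into $r^{(k+1)}$, I obtain the per-realization quadratic bound
\[
\lVert U^{(k+1)}-U^*\rVert_G^2 \le a\,\lVert U^{(k)}-U^*\rVert_G^2 + y^{(k+1)}\lVert U^{(k)}-U^*\rVert_G + r^{(k+1)}.
\]
Taking expectations, using that the fresh noise $w^{(k+1)}$ is independent of $U^{(k)}$ to factor $\mathbb{E}(y^{(k+1)}\lVert U^{(k)}-U^*\rVert_G) = \mathbb{E}(y^{(k+1)})\,\mathbb{E}\lVert U^{(k)}-U^*\rVert_G$, and bounding $\mathbb{E}\lVert U^{(k)}-U^*\rVert_G \le W_G(\mu_{U^{(k)}},\mu_{U^*})$ by Jensen, yields $W_G^2(\mu_{U^{(k+1)}},\mu_{U^*}) \le a\,W_G^2(\mu_{U^{(k)}},\mu_{U^*}) + \mathbb{E}(y^{(k+1)})\,W_G(\mu_{U^{(k)}},\mu_{U^*}) + \mathbb{E}(r^{(k+1)})$. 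Completing the square in $W_G(\mu_{U^{(k)}},\mu_{U^*})$ and using $\sqrt{p^2+q}\le |p|+\sqrt{|q|}$ produces exactly the recursion \eqref{eq:lemma2_14_main}.

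For the first (primal) inequality I would return to the deviation equation from \eqref{eq:noisy_primal}, take its inner product with $X^{(k+1)}-X^*$, and apply strong convexity after completing the square in $X^{(k+1)}-X^* + \tfrac{1}{\sqrt{2}m_f}Dw^{(k+1)}$; this bounds $\lVert X^{(k+1)}-X^* + \tfrac{1}{\sqrt{2}m_f}Dw^{(k+1)}\rVert$ by $\tfrac{1}{\sqrt{m_f}}\lVert U^{(k)}-U^*\rVert_G$ up to the noise correction $\tfrac{1}{\sqrt{2}m_f}\lVert Dw^{(k+1)}\rVert$. Coupling $X^{(k+1)}$ with $X^*-\tfrac{1}{\sqrt{2}m_f}Dw^{(k+1)}$ through the shared $w^{(k+1)}$ and applying the triangle inequality for $W$ to split off $W(\mu_{X^*-\frac{1}{\sqrt{2}m_f}Dw^{(k+1)}},\boldsymbol{\mu}^*)$ gives the stated bound after taking expectations. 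The main obstacle, I expect, is the bookkeeping here: organizing the many noise cross terms so that they assemble into exactly the prescribed forms of $y^{(k+1)}$ and $r^{(k+1)}$ with constants $b,c,d,e$, while ensuring that every factor pulled out of the expectation truly depends only on the fresh $w^{(k+1)}$ (hence is independent of $U^{(k)}$). A secondary subtlety is that $r^{(k+1)}$ need not be sign-definite, which is why the square-completion step must retain the absolute value $\lvert \mathbb{E}(r^{(k+1)}) - (\mathbb{E}(y^{(k+1)})/2\sqrt{a})^2\rvert$.
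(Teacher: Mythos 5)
Your proposal is correct and follows essentially the same route as the paper's own proof: deviation equations from the KKT conditions, the $G$-norm polarization identity with strong convexity and Lipschitz bounds, the $\kappa$-based Young inequality yielding $\delta$, isolation of the noise into $y^{(k+1)}$ and $r^{(k+1)}$ to get the per-realization quadratic recursion, and conversion to Wasserstein distances via point-mass couplings, independence of $w^{(k+1)}$ from $U^{(k)}$, Jensen, completion of the square, and the triangle inequality. The subtleties you flag (sign-indefiniteness of $r^{(k+1)}$ forcing the absolute value, and independence of the factored noise terms) are exactly the ones the paper's argument handles.
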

\begin{proof}
    The proof is included in Appendix \ref{sec:proof_lemma2}.
\end{proof}

\subsection{Our Main Result}
\revzero{By inspection of eq. (\ref{eq:lemma2_14_main}-\ref{eq:theorem_end}), if there exists a $\rho > 0$ such that $\delta$ satisfies $2 m_f \delta > 1$, then $a <1$ and we have convergence in terms of Wasserstein distance for the primal iterates $x_i^{(k)}$. This idea is formalized in the following theorem, whose result depends on the condition number of $f(X)$, $\tau_f = \frac{M_f}{m_f}$, and the condition number of the graph topology, $\tau_G = \frac{\sigma_\mathrm{max}(M_{+})}{\sigma_\mathrm{min}(M_{-})}$.
\begin{theorem}
    Assume Assumptions (1-5) hold and $\tau_f^{-1}\sqrt{\tau_f^{-2}+4\tau_G^{-2}}-\tau_f^{-2} > m_f^{-1}$ is true. Then, there exists a $\rho>0$ such that $a<1$ and 
    \begin{multline*}
        W(\mu_{X^{(k+1)}}, \boldsymbol{\mu}^*) \leq \frac{1}{\sqrt{m_f}} \left(\sqrt{a}\right)^{k} W_G(\mu_{U^{0}}, \mu_{U^*}) +
    \frac{1}{\sqrt{a m_f}}\frac{Y}{1-\sqrt{a}} +\\
    \frac{1}{\sqrt{m_f}}\frac{\sqrt{R}}{1-\sqrt{a}}+ 
        \frac{1}{\sqrt{2}m_f}\sqrt{\mathbb{E}\left( \lVert D w^{(k+1)} \rVert^2\right)} +
    W( \mu_{X^* - \frac{1}{\sqrt{2}m_f} D w^{(k+1)}},\boldsymbol{\mu}^*),
    \end{multline*}
    $Y$ and $R$ are upper bounds on the terms $\mathbb{E}\left(y^{(l)} \right)$ and $\mathbb{E}\left( r^{(l)} \right)$ respectively, and $Y, R \geq 0$ holds.
\end{theorem}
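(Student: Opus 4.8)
The plan is to combine the two inequalities supplied by Lemma~2. The first bounds $W(\mu_{X^{(k+1)}},\boldsymbol{\mu}^*)$ by $\tfrac{1}{\sqrt{m_f}}\,W_G(\mu_{U^{(k)}},\mu_{U^*})$ plus two $k$-independent bias terms, and the second, \eqref{eq:lemma2_14_main}, is a contraction-type recursion for $W_G(\mu_{U^{(k)}},\mu_{U^*})$ with multiplicative factor $\sqrt{a}$. Everything hinges on producing a step size for which $a<1$; observe from $a=\tfrac{2m_f+1}{2m_f(1+\delta)}$ that $a<1$ is \emph{equivalent} to $2m_f\delta>1$. Once such a $\rho$, together with a value of the free parameter $\kappa>1$, is fixed, all constants $a,b,c,d,e,Y,R$ are determined and finite, and it remains to unroll the recursion and substitute.

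First I would show that the hypothesis $\tau_f^{-1}\sqrt{\tau_f^{-2}+4\tau_G^{-2}}-\tau_f^{-2}>m_f^{-1}$ is exactly the condition $\sup_{\rho>0,\kappa>1}\delta>\tfrac{1}{2m_f}$. Writing $t=\tau_f^{-1}$ and $s=\tau_G^{-1}$, the first branch of $\delta$ equals $(1-\kappa^{-1})s^2$ and is independent of $\rho$, while the denominator of the second branch, $\tfrac{\rho}{4}\sigma_{\mathrm{max}}^2(M_{-})+\tfrac{\kappa M_f^2}{\rho\,\sigma_{\mathrm{min}}^2(M_{-})}$, is minimized over $\rho$ by the arithmetic--geometric mean inequality at $\rho^\star=2\sqrt{\kappa}\,M_f/\bigl(\sigma_{\mathrm{max}}(M_{+})\sigma_{\mathrm{min}}(M_{-})\bigr)$, giving the second branch the value $ts\,\kappa^{-1/2}$. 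Maximizing $\min\{(1-\kappa^{-1})s^2,\ ts\,\kappa^{-1/2}\}$ over $\kappa>1$ balances the two branches; substituting $u=\kappa^{-1/2}$, the balance equation $su^2+tu-s=0$ has root $u^\star=\tfrac{-t+\sqrt{t^2+4s^2}}{2s}\in(0,1)$, so $\kappa^\star=(u^\star)^{-2}>1$ is admissible, the optimum is attained, and its value is $\delta_{\max}=\tfrac12\bigl(t\sqrt{t^2+4s^2}-t^2\bigr)$. The hypothesis is precisely $2m_f\delta_{\max}>1$, so fixing $\kappa=\kappa^\star$ and $\rho=\rho^\star(\kappa^\star)$ yields $2m_f\delta>1$ and hence $a<1$.

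With $\rho,\kappa$ fixed, I would unroll \eqref{eq:lemma2_14_main}. Since each $w^{(l)}\sim\mathcal{N}(0,I)$ is i.i.d., the quantities $\mathbb{E}(y^{(l)})$ and $\mathbb{E}(r^{(l)})$ do not depend on $l$ and are finite, so they admit nonnegative upper bounds $Y\ge\mathbb{E}(y^{(l)})$ and $R\ge\mathbb{E}(r^{(l)})$, with $Y\ge0$ because $y^{(l)}$ is a sum of norms, and $R\ge0$ by choice. I would then control the additive term using $\sqrt{|A-B|}\le\sqrt{A+B}\le\sqrt{A}+\sqrt{B}$ with $A=\mathbb{E}(r^{(l)})$ and $B=(\mathbb{E}(y^{(l)})/(2\sqrt{a}))^2$, yielding a uniform per-step increment $\tfrac{Y}{\sqrt{a}}+\sqrt{R}$. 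Summing the geometric series (convergent because $\sqrt{a}<1$) gives
\[
  W_G(\mu_{U^{(k)}},\mu_{U^*})\ \le\ (\sqrt{a})^{k}\,W_G(\mu_{U^{(0)}},\mu_{U^*})+\frac{1}{1-\sqrt{a}}\Bigl(\frac{Y}{\sqrt{a}}+\sqrt{R}\Bigr).
\]
Substituting this into the first inequality of Lemma~2 at index $k+1$ and distributing the $\tfrac{1}{\sqrt{m_f}}$ factor reproduces the geometrically decaying term and the two constant terms of the statement, while the final two bias terms carry over verbatim.

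The main obstacle is Step~1: because $\delta$ is itself a minimum of two competing expressions, extracting the sharp threshold requires jointly optimizing over $\rho$ (via the arithmetic--geometric mean inequality on one branch) and over $\kappa$ (by balancing the two branches), and then verifying that the balancing value $\kappa^\star$ lies in the admissible range $\kappa>1$ so that the supremum is attained rather than merely approached. The unrolling in the later steps is routine once the constant per-step increment is isolated; the only care needed is the elementary inequality controlling the absolute value in \eqref{eq:lemma2_14_main} and the observation that the i.i.d.\ Gaussian structure makes $\mathbb{E}(y^{(l)})$ and $\mathbb{E}(r^{(l)})$ independent of $l$, which is what permits both the bias floor and the geometric sum to be written with the $k$-independent constants $Y$ and $R$.
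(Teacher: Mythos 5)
Your proposal is correct and follows essentially the same route as the paper's proof: the same AM--GM optimization of $\delta$ over $\rho$, the same balancing of the two monotone branches of $\delta$ over $\kappa$ (your quadratic in $u=\kappa^{-1/2}$ is exactly the paper's computation rewritten in the variables $t=\tau_f^{-1}$, $s=\tau_G^{-1}$), and the same unrolling of the recursion into a geometric series with the $k$-independent per-step increment $Y/\sqrt{a}+\sqrt{R}$. The one point to tighten is your chain $\sqrt{|A-B|}\le\sqrt{A+B}$, which requires $A\ge 0$; since $r^{(l)}$ contains the negative term $-e\lVert Dw^{(l)}\rVert^2$, the expectation $\mathbb{E}(r^{(l)})$ need not be nonnegative, so you should take $R\ge|\mathbb{E}(r^{(l)})|$ and use $|A-B|\le|A|+B$, which is precisely how the paper handles this step via the triangle inequality for the absolute value.
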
}
\begin{proof}
    The proof is included in Appendix \ref{sec:proof_theorem1}.
\end{proof}

\revzero{Because of the definitions $L_{+}=\frac{1}{2}M_{+}M_{+}^T$ and $L_{-}=\frac{1}{2}M_{-}M_{-}^T$, we have that $\tau_G = \sqrt{\frac{\sigma_\mathrm{max}(L_{+})}{\sigma_\mathrm{min}(L_{-})}}$. $\sigma_\mathrm{min}(L_{-})$ is known as the graph's algebraic connectivity \citep{shi2014linear}, while $\sigma_\mathrm{max}(L_{+})$ is related to the node degrees of $\mathcal{G}$ \citep{cvetkovic2007signless}. Assuming a ring-cyclic graph topology of $5$ agents, we have $\tau_G = 1.7$. If we further assume $m_f = 2$, we get the sufficient condition for convergence: $\tau_f <1.23$. Increasing the connectedness of the graph, for a fully connected graph of $5$ agents, we have $\tau_G = 1.26$. Then, for $m_f = 2$, the sufficient condition becomes $\tau_f < \sqrt{6}$. We observe that as the graph becomes more connected, $\tau_G$ decreases. Then, fixing $\tau_f$ and $m_f$, Theorem 1 implies that D-ADMMS converges if the connectivity is above a certain threshold. In addition, $a$ decreases with increasing connectivity. The convergence upper bound however also contains terms, including $Y$ and $R$, which depend on noise and topology characteristics and can increase as the graph becomes more connected ($D$ has larger values in its diagonal and $a$ decreases). }

\revzero{We also note that the sufficient convergence condition $m_f^{-1}<\tau_f^{-1}\sqrt{\tau_f^{-2}+4\tau_G^{-2}}-\tau_f^{-2} $ is not symmetric with respect to the condition number $\tau_f$. In other words, scaling the functions $f_i$ by a common constant, which does not change $\tau_f$, modifies the convergence condition. This is because the scaling constant affects the solution of the proximal update for $x_i^{(k+1)}$}.

\revzero{Since ADMM is practically stable when the $f_i$ are not strongly convex, e.g., for indicator functions, our algorithm is also applicable in this case. However, the theoretical analysis above assumes strong convexity.}

\section{Simulation Results}

In this section, we test D-ADMMS in simulation. We first discuss the existing baselines from the literature and then provide simulation results for two different scenarios: Bayesian linear regression and Bayesian logistic regression.\footnote{The source code for the presented experiments can be found in the following repository: \url{https://github.com/sisl/Distributed_ADMM_Sampler}}

\subsection{Description of Baseline Algorithms}
We compare our proposed algorithm against D-SGLD \citep{gurbuzbalaban2021decentralized}, D-SGHMC \citep{gurbuzbalaban2021decentralized}, and D-ULA \citep{parayil2021decentralized}.
We let $S$ be a doubly stochastic matrix associated with the network's communication topology, i.e., a matrix whose $(i,j)$-entry, which we denote by $S_{ij}$, is non-negative and less than one, and is different from zero if and only if the entry $j \in \mathcal{N}_i$; whenever $j = i$, we let $S_{ii}$ be defined as $1 - \sum_{j \in \mathcal{N}_i} S_{ij}$.

D-SGLD is characterized by
\begin{equation}\label{eq:D-SGLD}
x_i^{(k+1)} = \sum_{j \in \mathcal{N}_i \cup \lbrace i \rbrace} S_{ij}x_j^{(k)} - \eta_{\mathrm{DSGLD}} \nabla f_i(x_i^{(k)})
+ \sqrt{2\eta_{\mathrm{DSGLD}}} w_i^{(k+1)},\ w_i^{(k+1)}\sim \mathcal{N}(0, I).
\end{equation} 
D-SGHMC proceeds with
\begin{equation}\label{eq: D-SGHMC_1}
v_i^{(k+1)} = v_i^{(k)} - \eta_{\mathrm{DSGHMC}} \left( \gamma v_i^{(k)} + \nabla f_i(x_i^{(k)})\right) +
\sqrt{2\gamma\eta_{\mathrm{DSGHMC}}}w_i^{(k+1)},\ w_i^{(k+1)}\sim \mathcal{N}(0, I),
\end{equation}
\begin{equation}\label{eq: D-SGHMC_2}
x_i^{(k+1)} = \sum_{j \in \mathcal{N}_i \cup \lbrace i \rbrace} S_{ij}x_j^{(k)} + \eta_{\mathrm{DSGHMC}} v_i^{(k+1)}.
\end{equation}
Finally, D-ULA evolves through the update
\begin{multline}\label{eq:D-ULA}
    x_i^{(k+1)} = x_i^{(k)} - \zeta^{(k)} \sum_{j \in \mathcal{N}_i} \left(x_i^{(k)}-x_j^{(k)} \right) - \alpha^{(k)} N \nabla f_i(x_i^{(k)})\\
    + \sqrt{2\alpha^{(k)}} w_i^{(k+1)}, w_i^{(k+1)}\sim \mathcal{N}(0, N I).
\end{multline}

\subsection{Bayesian Linear Regression}
\label{sec:bay_lin}
\subsubsection{Problem}
We study the distributed Bayesian linear regression setting. We consider a varying number of agents ($N=5,20,100$) and a varying network topology (fully connected, ring-cyclic, and no-edge) in our results. We assume that the model parameter is $x \in \mathbb{R}^2$, and we simulate IID data points $(z^l, y^l)$ for each agent through
\begin{equation}
    \delta^l \sim \mathcal{N}(0, \xi^2I),\ z^l \sim \mathcal{N}(0, I),\ y^l = x^Tz^l+\delta^l.
\end{equation}
We assign $\xi=4$. The prior on $x$ is $\mathcal{N}(0, \lambda I)$ with $\lambda=10$.
Each agent is assigned $n_i$ independent samples. The global posterior, from which we aim to sample, by a simple application of Bayes' rule, is of the form $\pi(x) \propto \exp{-\sum_{i \in \mathcal{V}} f_i(x)}$, where 
\begin{equation}
    f_i(x) = \dfrac{1}{2\xi^2}\sum_{l=1}^{n_i} \left(y_i^l - x^T z^l_i \right)^2 + \dfrac{1}{2\lambda N} \lVert x \rVert^2.
\end{equation}

\subsubsection{Algorithm}
We do not perform hyper-parameter tuning and assume $\eta_{\mathrm{DSGLD}}=0.009$, $\eta_{\mathrm{DSGHMC}}=0.1,\ \gamma=7$, as done by \citet{gurbuzbalaban2021decentralized}, while $\rho=5$. For D-ULA, we follow the logistic regression example by \citet{parayil2021decentralized} with modifications to avoid divergence. We assume $\alpha^{(k)} = {0.00082}/{(230+k)^{ \chi_2}}$, $\zeta^{(k)} = {0.48}/{(230+k)^{\chi_1}}$
We assume $\chi_1=\chi_2=0.05$ for the cyclic and no-edge topologies. For the case of the fully connected graph: when $N=5, 20$ we assume $\chi_1=0.55$ and $\chi_2=0.05$.
Note that the selection of parameters for D-ULA does not satisfy the conditions presented in the original paper \citep{parayil2021decentralized}, but the selected parameter values perform well experimentally. The algorithm by \cite{parayil2021decentralized} required the most testing to determine suitable parameters.
Note that $x_i^{(0)} \sim \mathcal{N}(0, I),\ \forall i \in \mathcal{V}$ for all algorithms. We also test two cases \revzero{for $n_i$ equal to $50$ and $200$}. \revzero{We tune the} parameters \revzero{independently} for both values of $n_i$. 

\subsubsection{Results}
We present results with respect to the $2$-Wasserstein distance between the empirical distribution of the iterates $x_i^{(k)}$ and the true posterior in Figures \ref{fig:bay_lin_reg_num_samples_[50]} and \ref{fig:bay_lin_reg_num_samples_[200]} for $n_i=50$ and $n_i=200$ respectively. The true posterior and the iterate distributions are Gaussian \citep{gurbuzbalaban2021decentralized}. Furthermore, the true posterior is known in closed form \citep{gurbuzbalaban2021decentralized}. The Gaussian distribution of each agent's iterate is estimated by empirically computing the expectation and covariance through $100$ independent trials. The same holds for the average agent iterate $\sum_{i \in \mathcal{V}} x_i^{(k)}/ N$. The closed form of the $2$-Wasserstein distance between Gaussians, in the case of non-singular covariances, only involves the covariances and means of the distributions \citep{givens1984class}. 

Figures \ref{fig:bay_lin_reg_num_samples_[50]} and \ref{fig:bay_lin_reg_num_samples_[200]} include the Wasserstein distance between the average iterate (avg) and the target distribution and the Wasserstein distance between the iterate of an agent (ag) and the target distribution. 
We observe that our proposed algorithm converges faster than the other presented schemes (D-SGLD, D-ULA, D-SGHMC) in the cases of sparsely connected network topology (ring-cyclic graph), while it can \revzero{become} slower for more connected graphs (fully connected). This agrees with results in the optimization literature, which show that C-ADMM can be slower as the communication topology becomes more connected \citep{bof2018admm}. In the case of a highly connected graph, the dual variables take a longer time to converge because of the larger number of inter-agent disagreement terms involved, which slows convergence. Although $a$ in Theorem 1 decreases with increasing connectedness, the theoretical upper bound need not decrease, because it includes terms that increase with increased connectivity. Therefore, the experimental behavior is not necessarily misaligned with the theoretical results.

We have also included the performance of ADMM (i.e., $w_i^{(k)}=0$ in Algorithm \ref{alg:proposed}) in Figures \ref{fig:bay_lin_reg_num_samples_[50]} and \ref{fig:bay_lin_reg_num_samples_[200]}. \revzero{ADMM is an optimization scheme and hence it drives $x_i^{(k)}$ to a point with an optimal value for the associated optimization problem, for any initialization of $x_i^{(k)}$}. 
We observe that in the no-edge topology, ADMM performs exactly the same as D-ADMMS. This is attributed to the last term of the primal update, which vanishes in the case of \revzero{a} no-edge topology \revzero{because} no agent has any neighbors. The superior performance (in Wasserstein distance) of ADMM compared to the sampling schemes for the case $N=100$ can be attributed to the very small ($\leq10^{-3}$) entries of the true posterior covariance in this case, because of the large number of samples present. This means that simply finding the MAP (which is also the mean because of Gaussian structure) with ADMM is enough for a small Wasserstein distance to the true posterior distribution. For $N=100$, the inability of CVXPY \citep{diamond2016cvxpy} to converge justifies the exclusion of the fully connected case.

The fast convergence (in cases) of D-ADMMS is intuitively  attributed to the proximal primal update of the algorithm. D-ADMMS is able to take large steps in terms of Wasserstein distance in the initial iterations because its primal update consists of completely solving an optimization problem. Even in the case of no-edge topology, we observe the large reduction of Wasserstein distance in the early iterations. Nevertheless, because each agent does not gather information from other agents in the update of its dual variables, the trajectory in the primal space remains uncorrected in this case. This leads D-ADMMS to converge farther from the target distribution.

\begin{figure}
     \begin{subfigure}
         \centering
        \includegraphics[width=0.8\columnwidth, center]{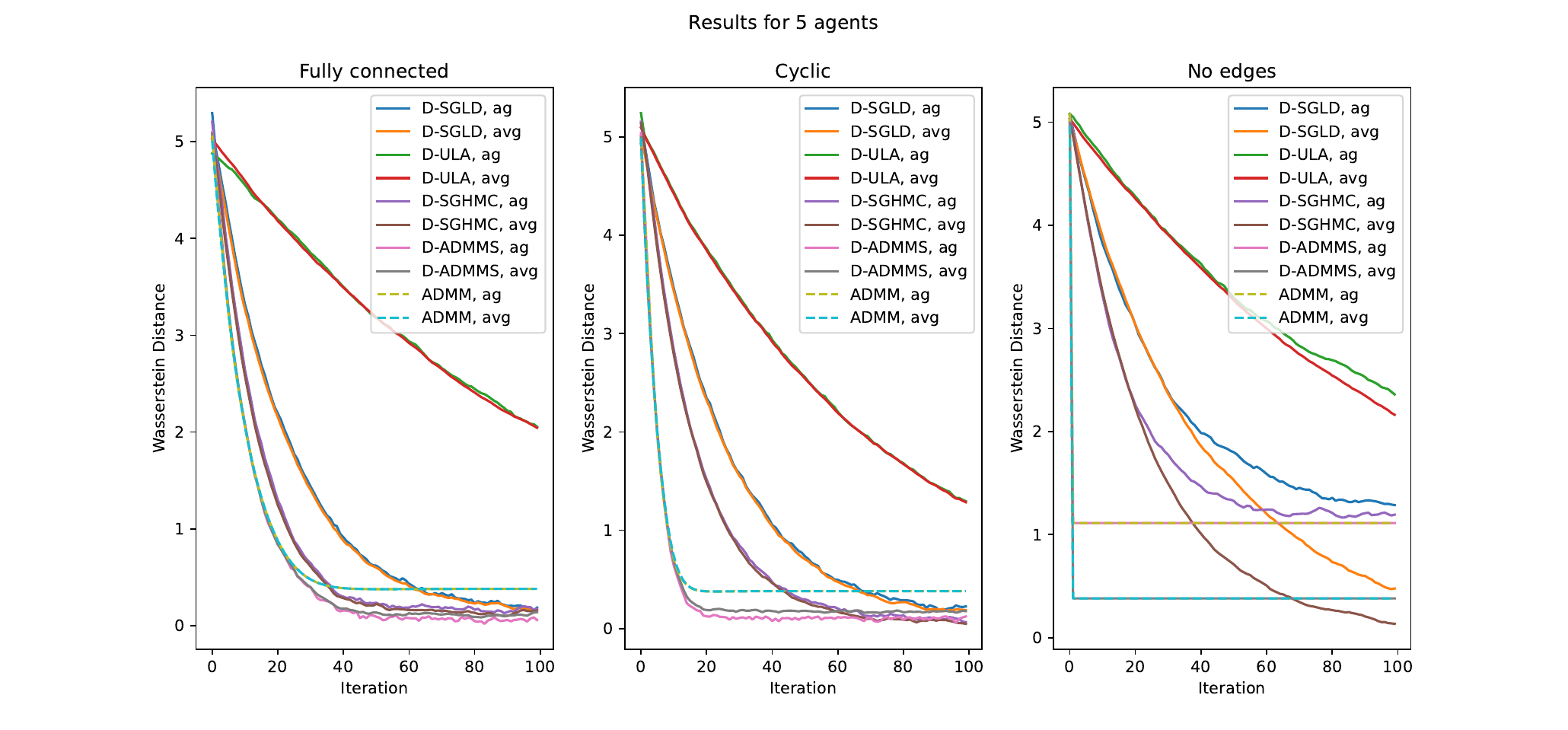}
         \label{fig:bay_lin_reg - num_agents [5] - num_samples [50]}
     \end{subfigure}
     \begin{subfigure}
         \centering
        \includegraphics[width=0.8\columnwidth, center]{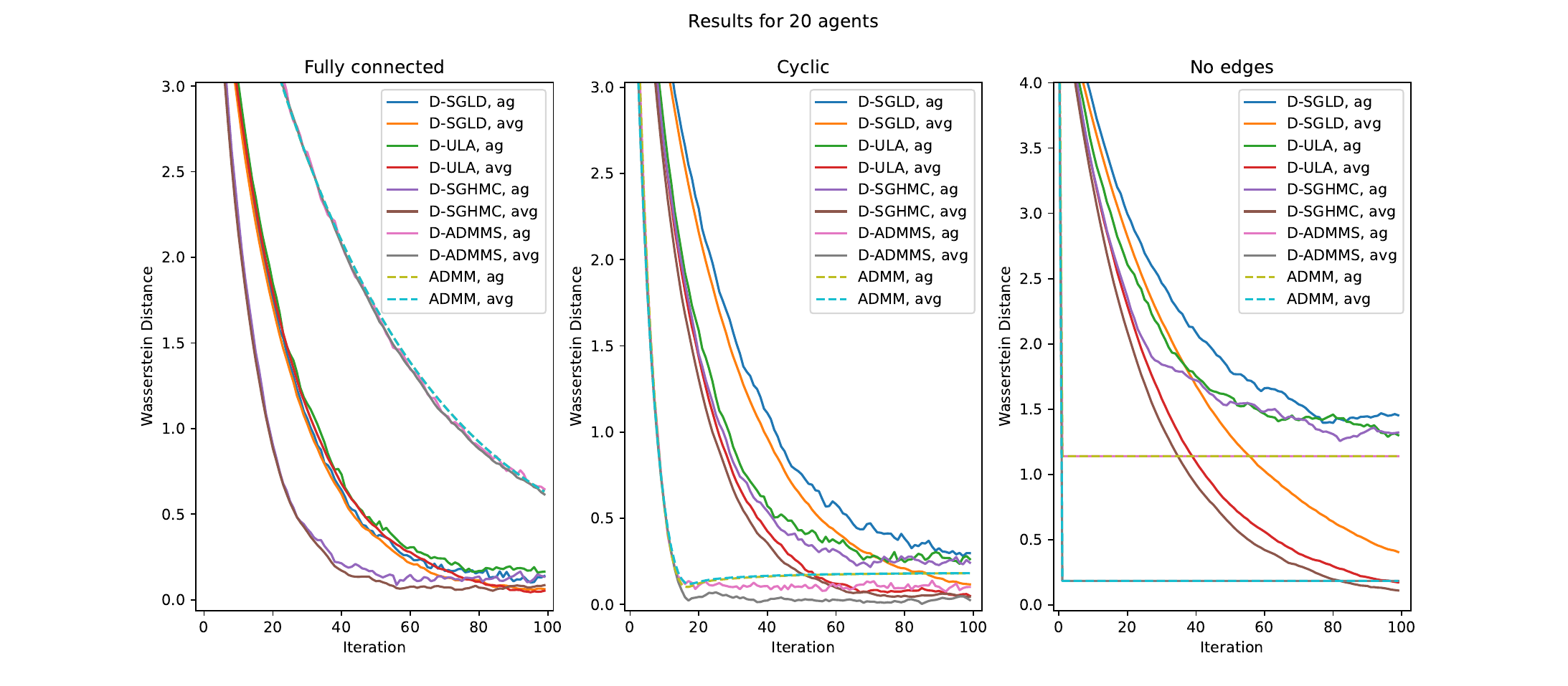}
         \label{fig:bay_lin_reg - num_agents [20] - num_samples [50]}
     \end{subfigure}
    \begin{subfigure}
         \centering
        \includegraphics[width=0.8\columnwidth, center]{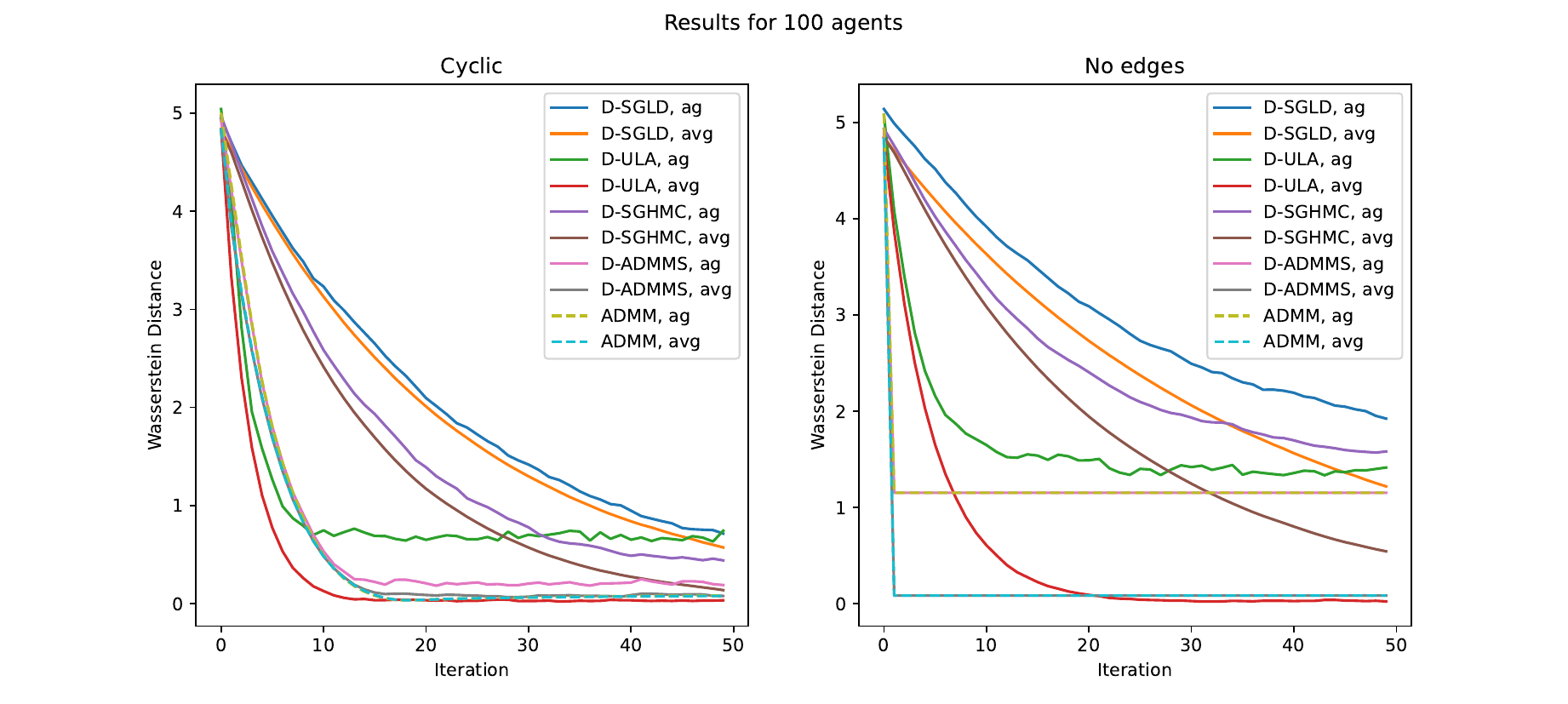}
         \label{fig:bay_lin_reg - num_agents [100] - num_samples [50]}
     \end{subfigure}
        \caption{$2$-Wasserstein distance to target distribution vs iteration for $n_i=50$. Both the distance to the target distribution of the average iterate (avg) and a specific agent iterate (ag) are provided for each method. For the sparsely connected (cyclic) graph topology, our proposed algorithm (D-ADMMS) outperforms the baselines (D-SGLD, D-ULA, D-SGHMC) in terms of Wasserstein distance between the distribution of the agent iterate and the target distribution.}
        \label{fig:bay_lin_reg_num_samples_[50]}
\end{figure}
\begin{figure}
     \begin{subfigure}
         \centering
        \includegraphics[width=0.8\columnwidth, center]{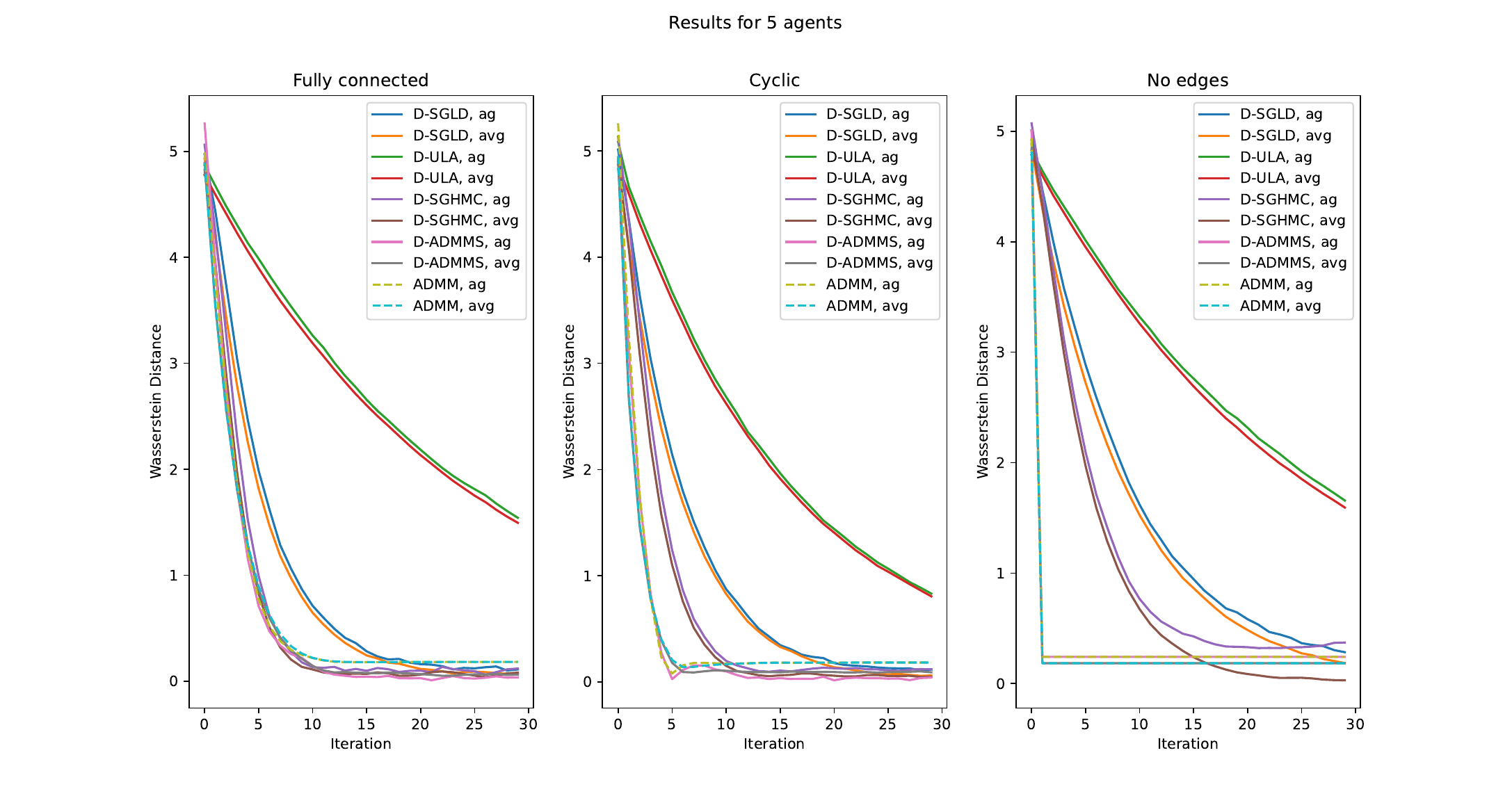}
         \label{fig:bay_lin_reg - num_agents [5] - num_samples [200]}
     \end{subfigure}
     \begin{subfigure}
         \centering
        \includegraphics[width=0.8\columnwidth, center]{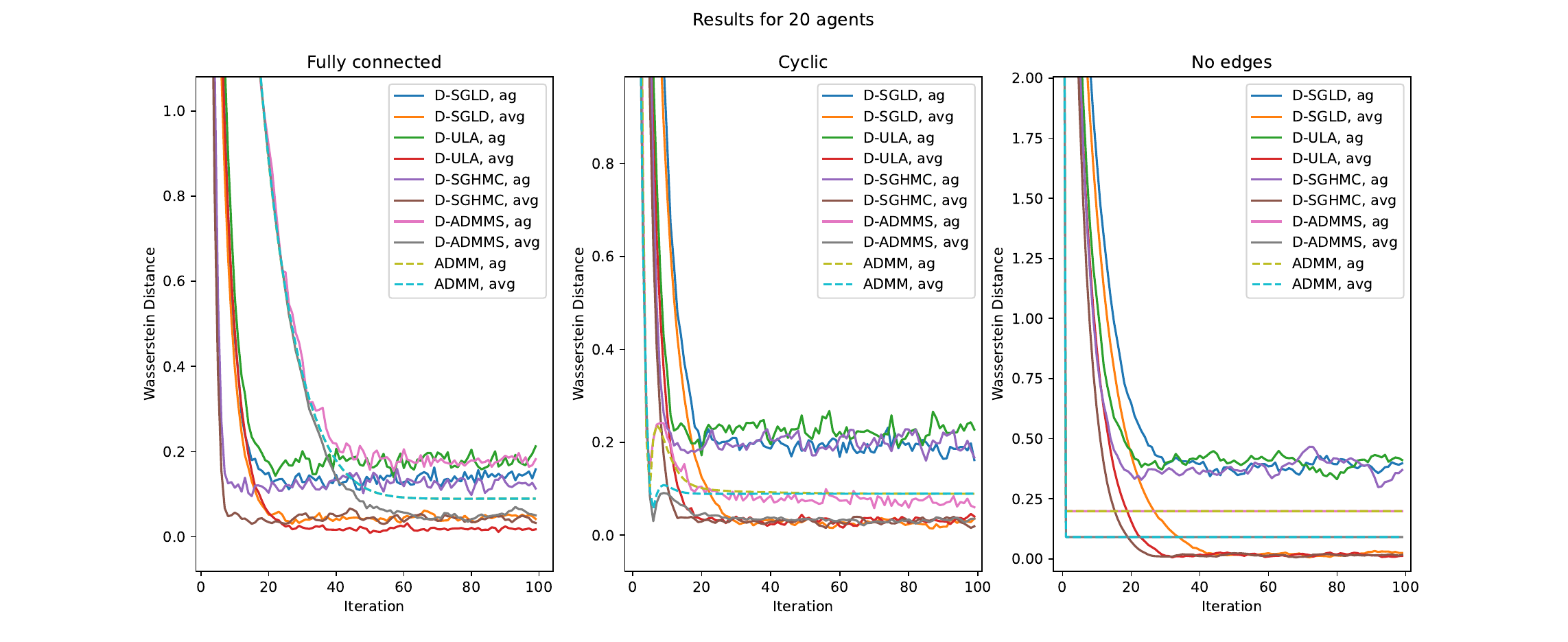}
         \label{fig:bay_lin_reg - num_agents [20] - num_samples [200]}
     \end{subfigure}
    \begin{subfigure}
         \centering
        \includegraphics[width=0.8\columnwidth, center]{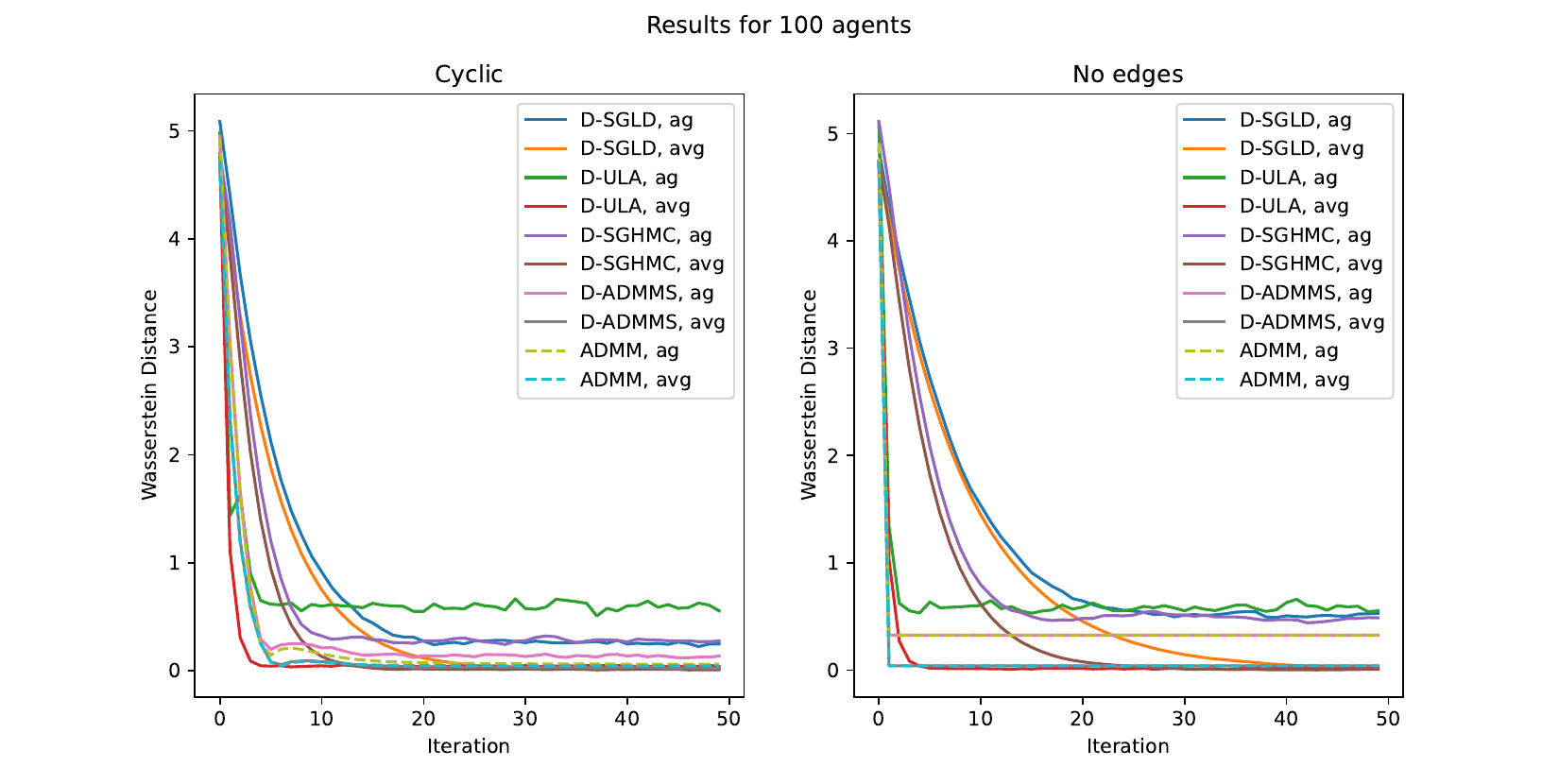}
         \label{fig:bay_lin_reg - num_agents [100] - num_samples [200]}
     \end{subfigure}
        \caption{$2$-Wasserstein distance to target distribution vs iteration for $n_i=200$. Both the distance to the target distribution of the average iterate (avg) and a specific agent iterate (ag) are provided for each method. For the sparsely connected (cyclic) graph topology, our proposed algorithm (D-ADMMS) outperforms the baselines (D-SGLD, D-ULA, D-SGHMC) in terms of Wasserstein distance between the distribution of the agent iterate and the target distribution.}
        \label{fig:bay_lin_reg_num_samples_[200]}
\end{figure}
\subsubsection{Ablation}
We study the robustness of the proposed scheme with respect to its hyper-parameter $\rho$. We note that D-ADMMS only requires setting the hyper-parameter $\rho$. In Figure \ref{fig:bay_ling_reg_ablation_rho}, we assume a network of $20$ agents with $n_i=50$ and we compute the $2$-Wasserstein distance between the empirical distribution of an agent's iterate, $x_i^{(k)}$, when D-ADMMS is deployed, and the true posterior. We deploy D-ADMMS with different values of $\rho$ in order to study the algorithm's robustness to the choice of hyper-parameter. We use $100$ trials in order to determine the iterate's empirical distribution.

Figure \ref{fig:bay_ling_reg_ablation_distr} shows the convergence in terms of $2$-Wasserstein distance between an agent's iterate, when D-ADMMS is deployed, and the true posterior for a cyclic network of $20$ agents with $n_i=50$. In this case, however, we fix $\rho=5$ and vary the initial probability distribution of the $100$ sample iterates. We perform $10$ experiments. For the first experiment, we assume that the initial distribution for each agent iterate is the standard normal ($x_i^{(0)} \sim \mathcal{N}(0, I),\ \forall i \in \mathcal{V}$). In each remaining experiment $q$, the $100$ sample iterates are initially drawn from the distribution $\mathcal{N}\left( (-1,2), \Sigma_q \right)$, where $\Sigma_q = A_q A_q^T$, and the entries of $A_q$ are drawn from the uniform $\mathcal{U}(0,10)$.
Figure \ref{fig:bay_ling_reg_sample_distr} demonstrates the evolution of samples in D-ADMMS for a cyclic topology of $5$ agents with $n_i=50$, $x_i{(0)} \sim \mathcal{N}(0, I)$, $\rho =5$, and $x \in \mathbb{R}^1$.

\begin{figure}[h]
     \centering
        \includegraphics[width=0.8\columnwidth, center]{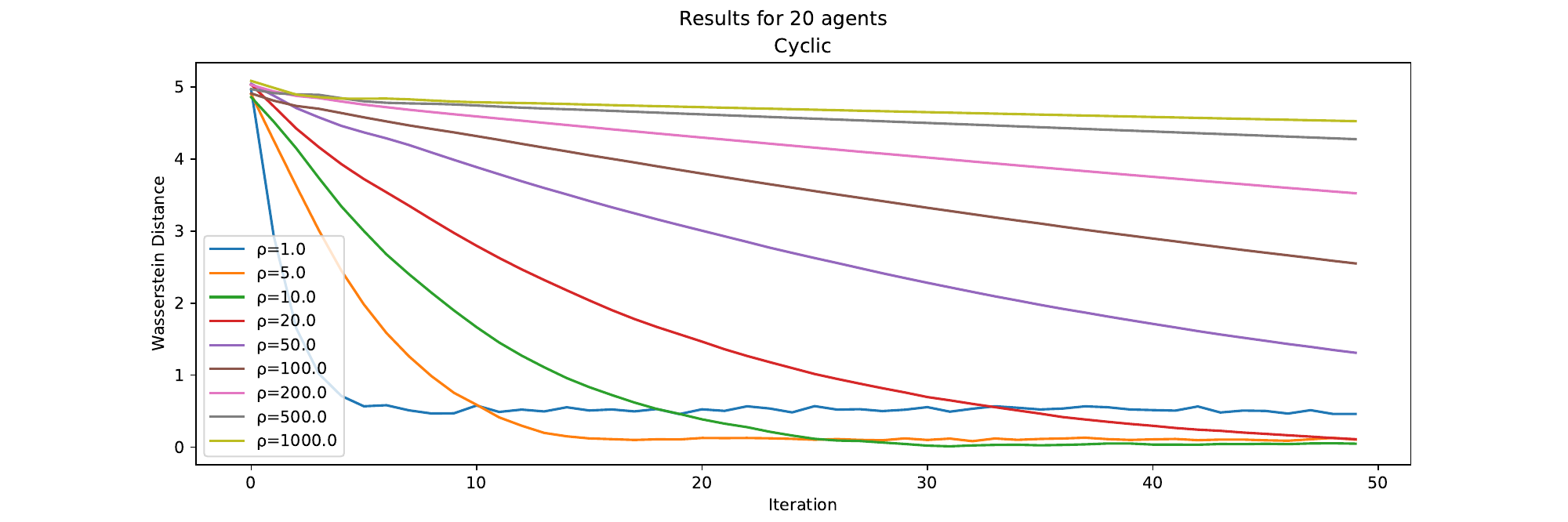}
      \caption{$2$-Wasserstein distance of an agent's iterate to the target distribution for varying $\rho$ in D-ADMMS.}
    \label{fig:bay_ling_reg_ablation_rho}
\end{figure}

\begin{figure}[h]
     \centering
        \includegraphics[width=0.8\columnwidth, center]{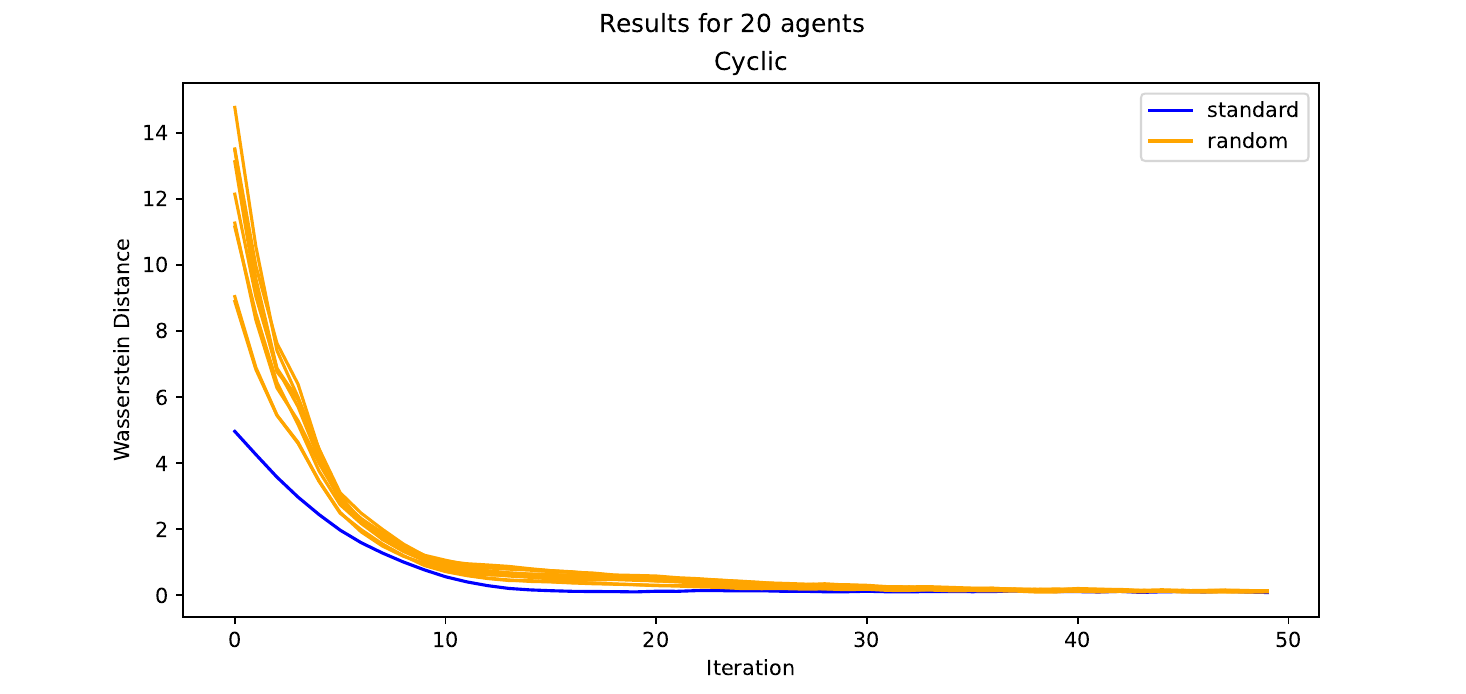}
      \caption{$2$-Wasserstein distance of an agent's iterate to the target distribution for varying initial sample distribution in D-ADMMS. Standard refers to $x_i{(0)} \sim \mathcal{N}(0, I)$.}
    \label{fig:bay_ling_reg_ablation_distr}
\end{figure}

\begin{figure}[h]
     \centering
        \includegraphics[width=0.8\columnwidth, center]{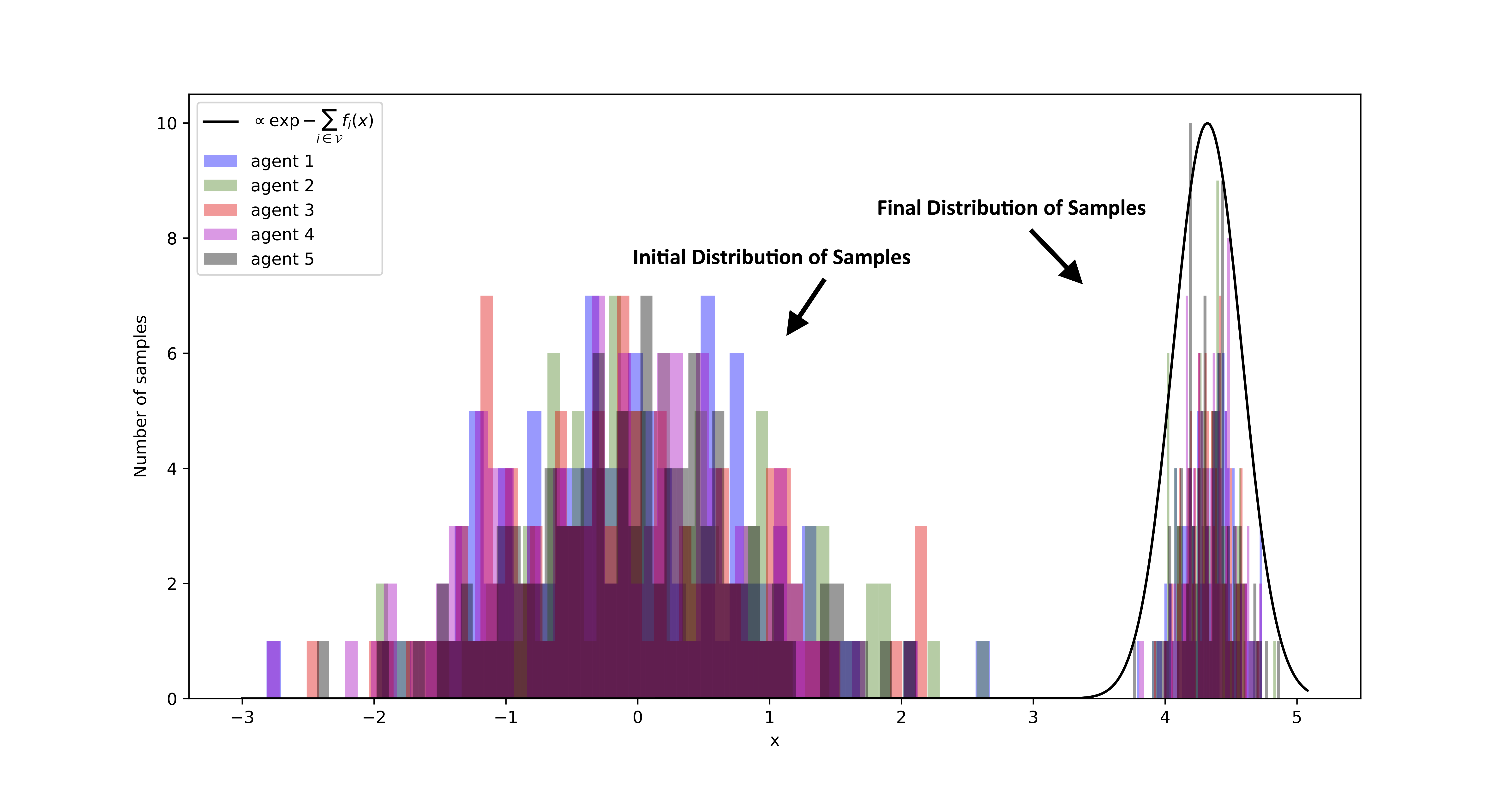}
      \caption{Evolution of the agents' sample distributions in D-ADMMS for a cyclic network of five agents. Each color corresponds to the samples of a different agent. We also include the true global posterior distribution up to a scaling factor.}
    \label{fig:bay_ling_reg_sample_distr}
\end{figure}

\subsection{Bayesian Logistic Regression}
\subsubsection{Problem}
We consider the distributed Bayesian logistic regression setting. We consider a varying number of agents ($N=5, 20, 50$) and a varying network topology (fully connected, ring-cyclic, and no-edge) in our results. We first create a dataset of IID $(z^l, y^l)$ pairs indexed by $l$, where $y^l$ is a binary label ($0$ or $1$). The likelihood function of a given sample for model parameters $x$ is
\begin{equation}\label{eq:likelihood-BayLogReg}
\mathbb{P}\left( y=1 \mid z; x \right) = \left(1+\exp{-x^Tz} \right)^{-1}.
\end{equation}
We assume $x \in \mathbb{R}^3$. The prior distribution over the model parameters is $\mathcal{N}(0, \lambda I)$, where $\lambda=10$.
A data point is created as follows:
\begin{equation}
    z^l \sim \mathcal{N}(0, 20I),\ p^l \sim \mathcal{U}(0,1),\\ 
\end{equation}
while the label $y^l$ is $1$ if $p^l \leq \left(1+\exp{-x^Tz^l} \right)^{-1}$  and $0$ otherwise.
The parameter $x$ is sampled from its prior distribution and $\mathcal{U}(0,1)$ denotes the uniform distribution between $0$ and $1$. Assume that each agent $i$ possesses $n_i$ independent data points $(z^l_i, y^l_i)$, where the first $\Tilde{n}_i$ data points are those with label $y_i^l=1$. Then the goal in Bayesian logistic regression is to sample from the global posterior, which is proportional to $\exp{-\sum_{i \in \mathcal{V}} f_i(x)}$, where
\begin{equation}
    f_i(x) = \sum_{l=1}^{n_i} \log \left(1+\exp{\psi_l x^Tz^l_i} \right) + \dfrac{\lVert x \rVert^2}{2\lambda N},
\end{equation}
and $\psi_l = -1$ if $1\leq l \leq \Tilde{n}_i$, while $\psi_l=1$ otherwise.
\subsubsection{Algorithm}
We use $n_i = 50$, $\rho=5$, $\eta_\mathrm{DSGLD}~= 0.0003$, $\eta_\mathrm{DSGHMC}\ = 0.02$, and $\gamma=30$ \citep{gurbuzbalaban2021decentralized}. For D-ULA, we assume $\alpha^{(k)}$ and $\zeta^{(k)}$ follow the same equations as in Section \ref{sec:bay_lin}.
We assume $\chi_1=\chi_2=0.05$ for the cyclic and no-edge topologies. For the case of the fully connected graph: when $N=5, 20$ we assume $\chi_1=0.55, \chi_2=0.05$, while for $N=50$ we set $\chi_1=\chi_2=0.9$. 

\begin{figure}[h!]
     \centering
     \begin{subfigure}
         \centering
        \includegraphics[width=0.78\columnwidth, center]{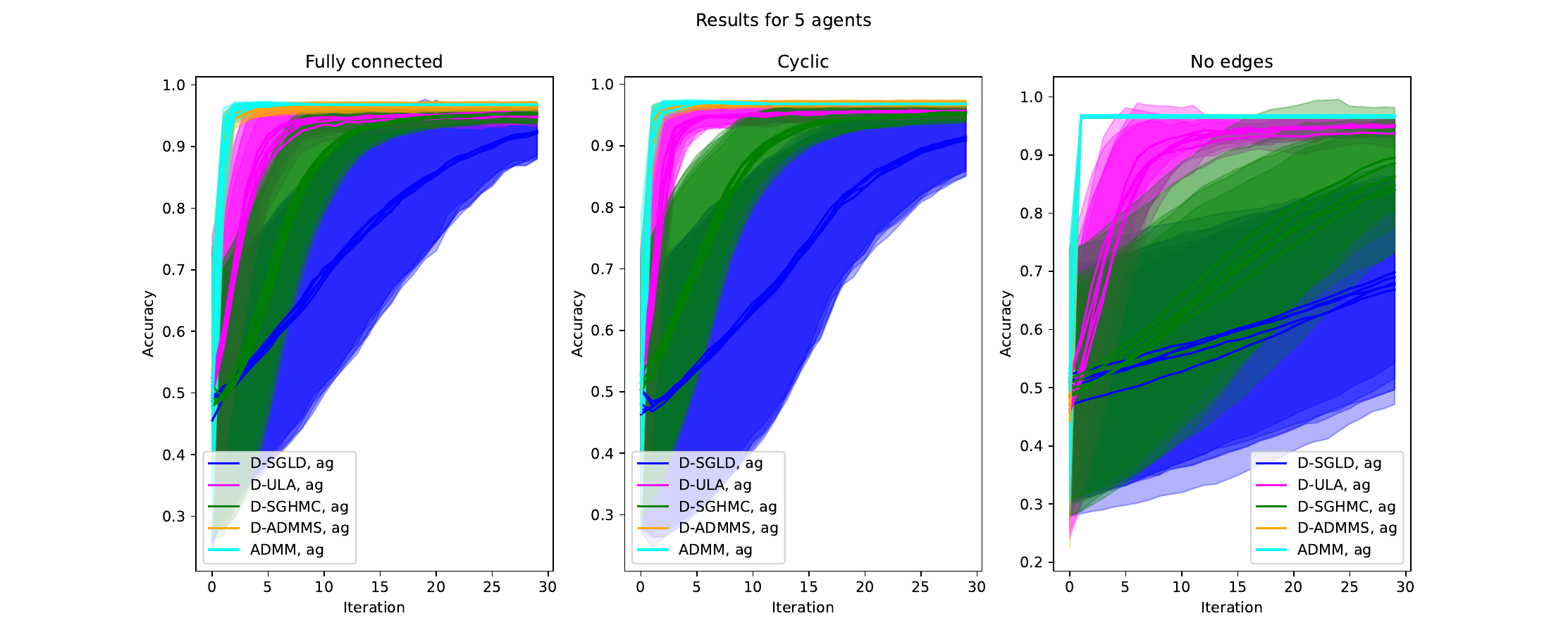}
        \label{fig:bay_log_reg - num_agents [5]}
     \end{subfigure}
     \begin{subfigure}
         \centering
        \includegraphics[width=0.78\columnwidth, center]{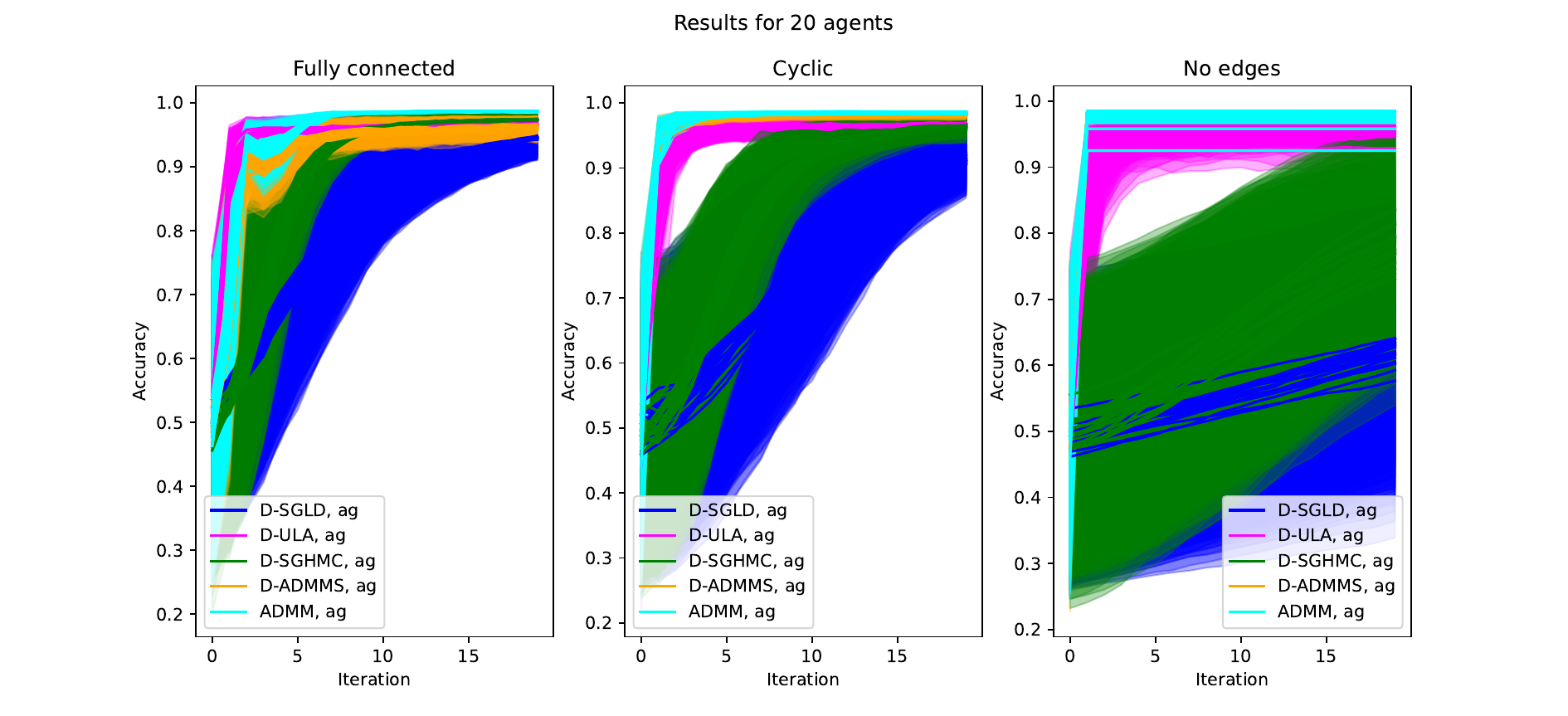}
        \label{fig:bay_log_reg - num_agents [20]}
     \end{subfigure}
    \begin{subfigure}
         \centering
        \includegraphics[width=0.78\columnwidth, center]{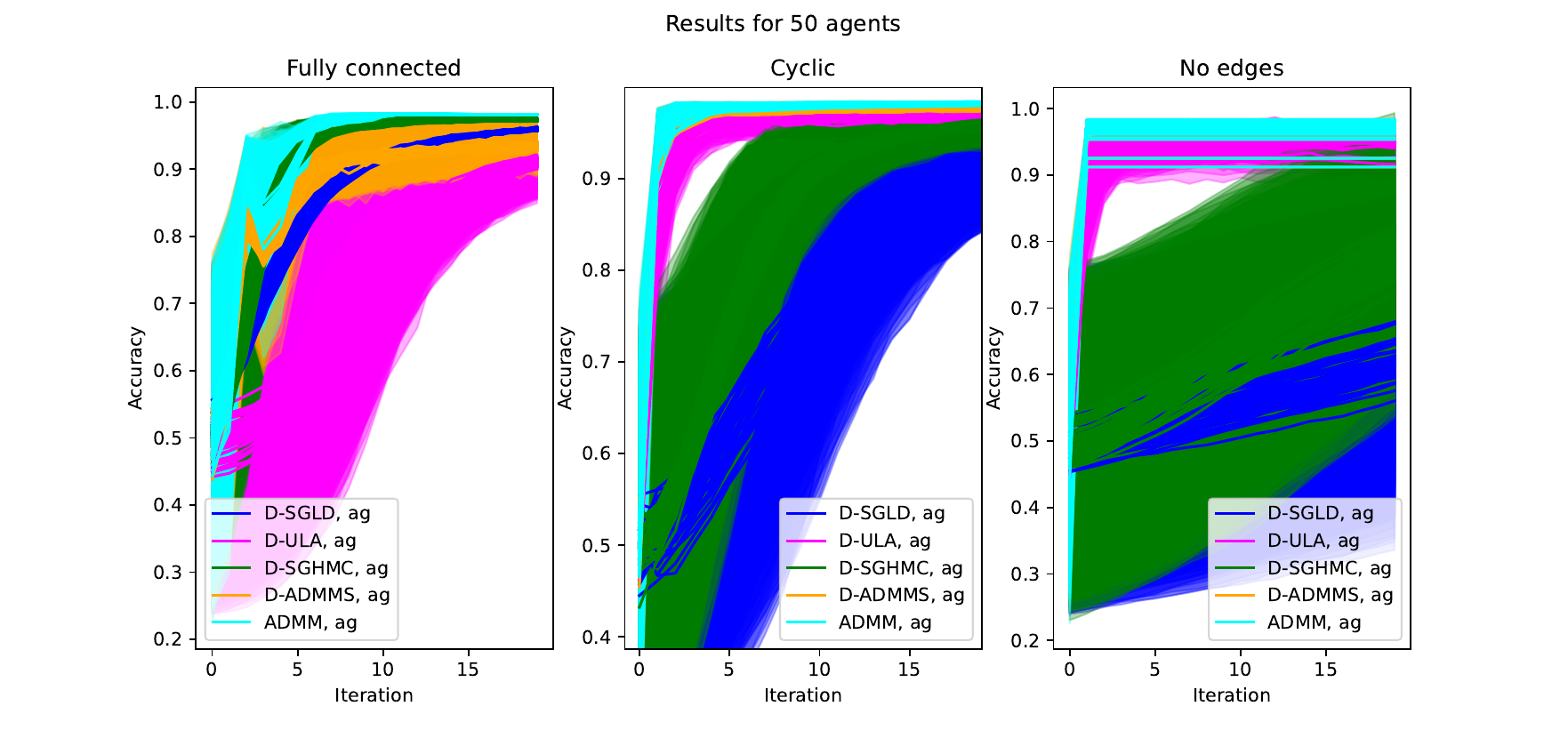}
        \label{fig:bay_log_reg - num_agents [50]}
     \end{subfigure}
      \caption{Prediction accuracy per iteration on complete dataset. We depict the accuracy per iteration on the complete dataset for each agent along with a 1-std interval over 100 independent trials for varying network topology. For the sparsely connected (cyclic) graph topology, our proposed algorithm (D-ADMMS) outperforms the baselines (D-SGLD, D-ULA, D-SGHMC) in terms of prediction accuracy.}
    \label{fig:bay_log_reg - num_samples [50]}
\end{figure}

\subsubsection{Discussion}
Because the posterior does not admit a Gaussian explicit formula, instead of $2$-Wasserstein distance, we provide results for prediction accuracy. In Figure \ref{fig:bay_log_reg - num_samples [50]}, we show the mean prediction accuracy on the total dataset, along with the $\pm$ $1$ standard deviation interval, of each agent, based on its iterate $x_i^{(k)}$, as a function of the iteration. If $\mathbb{P}(y=1 \mid z^l; x_i^{(k)}) \geq 0.5$, the agent assigns label $1$ to the data-point. The mean and standard deviation are computed through $100$ independent trials. We also include ADMM (i.e., $w_i^{(k)}=0$ in Algorithm \ref{alg:proposed}). 

We observe that D-ADMMS is able to obtain the highest accuracy out of all the sampling methods presented, in a smaller number of iterations for the cyclic topology. Its performance is similar to that of ADMM. ADMM possesses superior performance because it converges to the MAP model parameter, which is expected to have the highest accuracy. In addition, the deviation from the mean accuracy is smaller for D-ADMMS in the cyclic topology. For the no-edge topology, as in Bayesian linear regression, ADMM is identical to D-ADMMS. In the fully connected case for $N$ equal to $20$ and $50$ the decline in performance of D-ADMMS is similar to that found in the Bayesian linear regression task. \revzero{Overall, D-ADMMS performs worse as the number of agents and the connectivity of the graph increase. This behavior matches the observed performance in the Bayesian linear regression task.}

\section{Conclusions and Future Work}
\revzero{We proposed a novel distributed sampling algorithm based on ADMM. By using proximal updates to generate samples in a distributed manner for a log-concave distribution, our approach outperforms existing gradient-based algorithms. We have shown convergence of the distribution of the iterates of our algorithm to the target distribution and have demonstrated practical advantages for our method in regression problems. 
Despite these promising results, limitations of the proposed algorithm include synchronous and lossless communication among the agents of the network. 

Future directions include: i) analyzing the proposed scheme as the discretization of a stochastic differential equation to improve the convergence guarantees, ii) exploring connections between gradient flows and MCMC algorithms in distributed settings, iii) designing distributed sampling algorithms based on accelerated first-order or higher-order optimization methods, iv) considering distributed sampling with constrained support.}

\acks{The NASA University Leadership Initiative (grant $\#$ 80NSSC20M0163) provided funds to assist the first author with their research, but this article solely reflects the opinions and conclusions of its authors and not any NASA entity. For the first author, this work was also partially funded through the Alexander S. Onassis Foundation Scholarship program.}

\appendix

\newcommand{\refhere}{\textcolor{red}{\textbf{[(Reference here!)]}}}
\newcommand{\highlight}[1]{\textcolor{black}{#1}}
\newcommand{\rev}[1]{\textcolor{black}{#1}}

\newcommand{\cs}{[31]}

\section{Proof of Lemma 1}\label{sec:proof_lemma1}
We substitute 
\begin{equation}
     \frac{1}{2} M_{+}^T X^{(k)} - Z^{(k)} = 0
\end{equation}
into 
\begin{flalign}\label{eq:noisy_primal}
    \nabla f(X^{(k+1)}) + M_{-}\beta^{(k+1)} - \rho M_{+} \left(Z^{(k)} - Z^{(k+1)} \right) +\sqrt{2}D w^{(k+1)} = 0,\\
    \beta^{(k+1)} - \beta^{(k)} - \frac{\rho}{2} M_{-}^T X^{(k+1)} = 0,
\end{flalign}
to obtain
\begin{flalign}
    \nabla f(X^{(k+1)}) + M_{-}\beta^{(k+1)} - \frac{\rho}{2} M_{+} M_{+}^T \left(X^{(k)} - X^{(k+1)} \right) +\sqrt{2} D w^{(k+1)} = 0, \label{eq: lemma1_1}\\
    \beta^{(k+1)} - \beta^{(k)} - \frac{\rho}{2} M_{-}^T X^{(k+1)} = 0\label{eq: lemma1_2},
\end{flalign}
We observe that eq. (\ref{eq: lemma1_1}) depends on $M_{-}\beta^{(k+1)}$ rather than $\beta^{(k+1)}$. We thus multiply eq. (\ref{eq: lemma1_2}) by $M_{-}$ and obtain
\begin{equation}\label{eq: lemma1_3}
    M_{-}\beta^{(k+1)} - M_{-}\beta^{(k)} - \frac{\rho}{2} M_{-}M_{-}^T X^{(k+1)} = 0.
\end{equation}
We substitute eq. (\ref{eq: lemma1_3}) into eq. (\ref{eq: lemma1_1}) and get
\begin{equation}\label{eq:lemma1_100}
    \nabla f(X^{(k+1)}) + M_{-}\beta^{(k)} - \frac{\rho}{2} M_{+} M_{+}^T X^{(k)} + \left( \frac{\rho}{2} M_{+} M_{+}^T + \frac{\rho}{2} M_{-}M_{-}^T \right) X^{(k+1)}  +\sqrt{2} D w^{(k+1)} = 0.
\end{equation}
We define $p^{(k)}=M_{-}\beta^{(k)}$ and from eq. (\ref{eq: lemma1_3}, \ref{eq:lemma1_100}) we obtain the equivalent updates
\begin{flalign}
    \nabla f(X^{(k+1)}) + p^{(k)} - \frac{\rho}{2} M_{+} M_{+}^T X^{(k)} + \left( \frac{\rho}{2} M_{+} M_{+}^T + \frac{\rho}{2} M_{-}M_{-}^T \right) X^{(k+1)}  +\sqrt{2} D w^{(k+1)} = 0,\label{eq: lemma1_4}\\
    p^{(k+1)} - p^{(k)} - \frac{\rho}{2} M_{-}M_{-}^T X^{(k+1)} = 0\label{eq: lemma1_5}.
\end{flalign}
Notice that $L_{+}=\frac{1}{2}M_{+}M_{+}^T, L_{-}=\frac{1}{2}M_{-}M_{-}^T$ and $D = \dfrac{1}{2}\left( L_{+} + L_{-} \right)$. Hence eq. (\ref{eq: lemma1_4}-\ref{eq: lemma1_5}) can be written as
\begin{flalign}
    \nabla f(X^{(k+1)}) + p^{(k)} - \rho L_{+} X^{(k)} + 2\rho D X^{(k+1)}  +\sqrt{2} D w^{(k+1)} = 0,\label{eq: lemma1_6}\\
    p^{(k+1)} - p^{(k)} - \rho L_{-} X^{(k+1)} = 0\label{eq: lemma1_7}.
\end{flalign}
We have thus established that eq. (10-12) imply eq. (\ref{eq: lemma1_6}-\ref{eq: lemma1_7}). By a simple inspection, we can observe that eq. (\ref{eq: lemma1_6}-\ref{eq: lemma1_7}) are equivalent to the update equations of D-ADMMS in Algorithm 1, where $X^{(k)}$ is the concatenation of the $x_i^{(k)}$ from Algorithm \ref{alg:proposed}.

\section{Proof of Lemma 2}\label{sec:proof_lemma2}
\rev{Let us start with the following auxiliary lemma.}
\begin{lemma}
  Assume $x, y \in \mathbb{R}^n$. Then the inequality
  \begin{equation}\label{eq:aux}
      \|x+y\|^2 + (\kappa-1)\|x\|^2 \geq \left(1 - \frac{1}{\kappa}\right) \|y\|^2
  \end{equation}
holds for any $\kappa >1$.
\end{lemma}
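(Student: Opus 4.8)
The plan is to prove the auxiliary inequality
\begin{equation*}
    \|x+y\|^2 + (\kappa-1)\|x\|^2 \geq \left(1 - \frac{1}{\kappa}\right) \|y\|^2
\end{equation*}
by a direct algebraic expansion combined with a scaled Cauchy--Schwarz (Young's) inequality. First I would expand the left-hand side, writing $\|x+y\|^2 = \|x\|^2 + 2\langle x, y\rangle + \|y\|^2$, so that the left-hand side becomes $\kappa \|x\|^2 + 2\langle x, y\rangle + \|y\|^2$. The goal then reduces to showing $\kappa \|x\|^2 + 2\langle x,y\rangle + \|y\|^2 \geq (1 - \tfrac{1}{\kappa})\|y\|^2$, i.e., after moving terms, that $\kappa\|x\|^2 + 2\langle x,y\rangle + \tfrac{1}{\kappa}\|y\|^2 \geq 0$.

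The key observation is that the remaining expression is a perfect square. Indeed, since $\kappa > 1 > 0$, we can write
\begin{equation*}
    \kappa\|x\|^2 + 2\langle x,y\rangle + \frac{1}{\kappa}\|y\|^2 = \left\| \sqrt{\kappa}\,x + \frac{1}{\sqrt{\kappa}}\,y \right\|^2 \geq 0,
\end{equation*}
where the middle step follows by expanding the squared norm and noting that the cross term is $2\langle \sqrt{\kappa}\,x, \tfrac{1}{\sqrt{\kappa}}\,y\rangle = 2\langle x,y\rangle$. This immediately yields the desired inequality. The positivity of $\kappa$ is exactly what lets us take the real square roots $\sqrt{\kappa}$ and $1/\sqrt{\kappa}$, so the hypothesis $\kappa > 1$ (in fact $\kappa > 0$ suffices here, but $\kappa > 1$ is what the paper assumes) is used precisely at this point.

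I do not expect any genuine obstacle here: the statement is an elementary completion-of-the-square argument, and the only thing to be careful about is bookkeeping the coefficients so that the cross terms match after rescaling $x$ by $\sqrt{\kappa}$ and $y$ by $1/\sqrt{\kappa}$. An equivalent route, should one prefer to avoid introducing square roots, is to invoke Young's inequality in the form $2\langle x, y\rangle \geq -\kappa\|x\|^2 - \tfrac{1}{\kappa}\|y\|^2$, which gives the same conclusion directly; I would likely present the perfect-square version since it is self-contained and makes the equality case ($\sqrt{\kappa}\,x = -\tfrac{1}{\sqrt{\kappa}}\,y$) transparent.
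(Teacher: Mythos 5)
Your proof is correct and follows essentially the same route as the paper's: both expand $\|x+y\|^2$, reduce the claim to $\kappa\|x\|^2 + 2\langle x,y\rangle + \tfrac{1}{\kappa}\|y\|^2 \geq 0$, and recognize this as a perfect square (the paper multiplies through by $\kappa$ to write it as $\|\kappa x + y\|^2 \geq 0$, whereas you write $\bigl\| \sqrt{\kappa}\,x + \tfrac{1}{\sqrt{\kappa}}\,y \bigr\|^2 \geq 0$ directly, a cosmetic difference). No gaps.
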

\begin{proof}
    By expanding the left-hand side (LHS) of \eqref{eq:aux}, we obtain
    \begin{equation*}
        \lVert x \rVert^2 + 2 \langle x, y \rangle + \lVert y \rVert^2 + \left(\kappa -1\right) \lVert x \rVert^2 \geq  \left(1 - \frac{1}{\kappa}\right) \|y\|^2.
    \end{equation*}
    By bringing all terms to the LHS, we get
    \begin{equation*}
        \kappa \lVert x \rVert^2 + 2 \langle x, y \rangle + \frac{1}{\kappa}\lVert y \rVert^2  \geq  0.
    \end{equation*}
    By multiplying both sides by $\kappa$ and noticing that $\kappa>1>0$, we get
    \begin{equation*}
         \lVert \kappa x \rVert^2 + 2 \langle \kappa x, y \rangle + \lVert y \rVert^2  \geq  0,
    \end{equation*}
    which can be written as
    \begin{equation*}
        \lVert \kappa x +y \rVert^2 \geq 0,
    \end{equation*}
    this concluding the proof of the lemma.
\end{proof}

\highlight{Our proof strategy leverages equations (10-12) and the KKT conditions in (13-15), which yield the relations:}
\begin{equation} \label{eq:lemma2_0}
    \nabla f(X^{(k+1)}) - \nabla f(X^{*}) =  \rho M_{+} \left(Z^{(k)} - Z^{(k+1)} \right) - M_{-}\left( \beta^{(k+1)} - \beta^* \right)-\sqrt{2} D w^{(k+1)},
\end{equation}
\begin{equation}\label{eq:lemma2_1}
    \beta^{(k+1)} - \beta^{(k)} = \frac{\rho}{2} M_{-}^T \left( X^{(k+1)} -X^* \right),
\end{equation}
\begin{equation}\label{eq:lemma2_2}
     Z^{(k+1)} -Z^* = \frac{1}{2} M_{+}^T \left( X^{(k+1)} - X^* \right).
\end{equation}
\rev{We now exploit the relations in eq. (\ref{eq:lemma2_0}-\ref{eq:lemma2_2}) to show inequalities that hold for the iterates generated by Algorithm 1. To this end, we split the analysis into five steps, which are presented in the sequel.}
\subsection*{\highlight{Step 1: A basic inequality}}

\highlight{Since function $f(X)$ is strongly convex (under Assumption 1), we obtain}
\begin{multline}
    m_f \lVert X^{(k+1)} - X^* \rVert^2 \leq \langle X^{(k+1)} - X^*, \nabla f(X^{(k+1)}) -  \nabla f(X^*) \rangle =\\
    \langle X^{(k+1)} - X^*, \rho M_{+} \left(Z^{(k)} - Z^{(k+1)} \right) - M_{-}\left( \beta^{(k+1)} - \beta^* \right)-\sqrt{2} D w^{(k+1)}\rangle =\\
    \langle X^{(k+1)} - X^*, \rho M_{+} \left(Z^{(k)} - Z^{(k+1)} \right) \rangle \\
    + \langle X^{(k+1)} - X^*, - M_{-}\left( \beta^{(k+1)} - \beta^* \right)\rangle + \langle X^{(k+1)} - X^*, -\sqrt{2} D w^{(k+1)}\rangle = \\
    \rho \langle M_{+}^T \left( X^{(k+1)} - X^*\right),  Z^{(k)} - Z^{(k+1)} \rangle \\
    - \langle M_{-}^T \left(X^{(k+1)} - X^*\right),  \beta^{(k+1)} - \beta^* \rangle - \langle X^{(k+1)} - X^*, \sqrt{2} D w^{(k+1)}\rangle.
    \label{eq:theo-proof-main-ineq}
\end{multline}
\highlight{The first equality holds due to eq. \eqref{eq:lemma2_0}, the second and thrid ones hold by simply expanding the terms and performing trivial algebraic manipulations.} We substitute eq. (\ref{eq:lemma2_1}-\ref{eq:lemma2_2}) into the last equation of \eqref{eq:theo-proof-main-ineq} to \highlight{obtain}
\begin{multline}
    m_f \lVert X^{(k+1)} - X^* \rVert^2 \leq \\
    2\rho \langle Z^{(k+1)} - Z^*, Z^{(k)} - Z^{(k+1)} \rangle + \frac{2}{\rho} \langle \beta^{(k)} - \beta^{(k+1)}, \beta^{(k+1)} - \beta^* \rangle - \langle X^{(k+1)} - X^*, \sqrt{2} D w^{(k+1)}\rangle = \\
    2\left( U^{(k)} - U^{(k+1)} \right)^T G\left( U^{(k+1)} - U^*\right)-\langle X^{(k+1)} - X^*,\sqrt{2} D w^{(k+1)}\rangle,
    \label{eq:theo-proof-main-ineq-2}
\end{multline}
\highlight{where we recall that $U^{(k)} = (Z^{(k)}, \beta^{(k)})$ and $G= \mathrm{diag}\left\{\rho I_{2 \mid \mathcal{E} \mid d}, \frac{1}{\rho}I_{2 \mid \mathcal{E} \mid d}\right\}$. Using the relation $\inner[G]{a-b}{b-c} = \|a-c\|_G^2 - \|a-b\|_G^2 - \|b-c\|_G^2$ for the first term on the right-hand side of eq. \eqref{eq:theo-proof-main-ineq-2}, with $a=U^{(k)}$, $b=U^{(k+1)}$, and $c=U^*$, we obtain}
\begin{multline}\label{eq:lemma2_3}
    m_f \lVert X^{(k+1)} - X^* \rVert^2 \leq \\
    \lVert U^{(k)} - U^* \rVert_G^2 - \lVert U^{(k+1)} - U^* \rVert_G^2 - \lVert U^{(k)} - U^{(k+1)} \rVert_G^2 - \langle X^{(k+1)} - X^*, \sqrt{2} D w^{(k+1)}\rangle.
\end{multline}
We now upper bound the last term
\begin{multline}
    -\langle X^{(k+1)} - X^*, \sqrt{2} D w^{(k+1)}\rangle \leq \lVert X^{(k+1)} - X^* \rVert \lVert \sqrt{2} D w^{(k+1)} \rVert \\
    \leq \frac{1}{2} \left(\lVert X^{(k+1)} - X^* \rVert + \lVert \sqrt{2} D w^{(k+1)} \rVert \right)^2,
\end{multline}
\highlight{where we apply the inequality $ ab  \leq \frac{1}{2} ( a+b)^2$ in the last step. This yields}
\begin{multline}\label{eq:lemma2_7}
    m_f \lVert X^{(k+1)} - X^* \rVert^2 \leq 
    \lVert U^{(k)} - U^* \rVert_G^2 \\ - \lVert U^{(k+1)} - U^* \rVert_G^2 - \lVert U^{(k)} - U^{(k+1)} \rVert_G^2 + \frac{1}{2} \left(\lVert X^{(k+1)} - X^* \rVert + \lVert \sqrt{2} D w^{(k+1)} \rVert \right)^2.
\end{multline}
\highlight{Eq. \eqref{eq:lemma2_7} is the basic inequality of our analysis. The next steps of the proof constitute in further developing such inequality.}
\subsection*{\highlight{Step 2: Dealing with the term} $\lVert U^{(k)} - U^{(k+1)} \rVert_G^2 + m_f \lVert X^{(k+1)} - X^* \rVert^2$}
\highlight{Observe that}
\begin{multline}
    \lVert U^{(k)} - U^{(k+1)} \rVert_G^2 + m_f \lVert X^{(k+1)} - X^* \rVert^2 =\\
    \rho \lVert Z^{(k)} - Z^{(k+1)} \rVert^2 + \frac{1}{\rho} \lVert \beta^{(k)} - \beta^{(k+1)} \rVert^2 + m_f \lVert X^{(k+1)} - X^* \rVert^2.
    \label{eq:theo-proof-step-2}
\end{multline}
\highlight{We now focus on obtaining a lower bound for the right-hand side (RHS) term in eq. \eqref{eq:theo-proof-step-2}, which will be done in two steps as described below.}
\subsubsection*{\highlight{Step 2.1: An intermediate inequality}}
\highlight{We show that the inequality}
\begin{multline}
    \frac{\rho \kappa \sigma^2_{\mathrm{max}}(M_{+})}{(\kappa -1) \sigma^2_{\mathrm{min}}(M_{-})} \lVert Z^{(k+1)} - Z^{(k)} \rVert^2 +  \frac{\kappa M_f^2}{\rho \sigma^2_{\mathrm{min}}(M_{-})} \lVert X^{(k+1)} - X^{*} \rVert^2 \geq \\
     \frac{1}{\rho} \lVert \beta^{(k+1)} - \beta^* \rVert^2 - \frac{2\sqrt{2}}{\rho\sigma^2_{\mathrm{min}}(M_{-})} \lVert M_{-} \left(\beta^{(k+1)} - \beta^* \right) \rVert \lVert Dw^{(k+1)} \rVert +\\
     \frac{2}{\rho \sigma^2_{\mathrm{min}}(M_{-})} \lVert D w^{(k+1)} \rVert^2,
     \label{eq:theo-proof-intermediate-ineq-2}
\end{multline}
\rev{holds, for any $\kappa > 1$}. \highlight{The first step to obtain inequality \eqref{eq:theo-proof-intermediate-ineq-2} is to manipulate the relation in \eqref{eq:lemma2_0} by means of the inequality $\|a+b\|^2 + (\kappa-1)\|a\|^2 \geq \left(1 - \frac{1}{\kappa}\right) \|b\|^2$ (see Lemma 2 for a proof of this inequality), which holds for $\kappa > 1$. Indeed, setting $a = \nabla f(X^{(k+1)}) - \nabla f(X^\star)$ and $b = M_-\left( \beta^{(k+1)} - \beta^\star\right) + \sqrt{2} D w^{(k+1)}$, we obtain}
\begin{multline}
    \left(1-\frac{1}{\kappa}\right) \lVert M_{-} \left( \beta^{(k+1)} - \beta^*\right) + \sqrt{2}Dw^{(k+1)} \rVert^2 \\ \leq \lVert \rho M_{+} \left(Z^{(k)} - Z^{(k+1)} \right) \rVert^2 + (\kappa-1) \lVert \nabla f(X^{(k+1)}) - \nabla f(X^*)\rVert^2  \\
    \leq \rho^2 \sigma_\mathrm{max}^2(M_{+}) \lVert Z^{(k+1)}-Z^{(k)} \rVert^2 +  (\kappa-1) M_f^2 \lVert X^{(k+1)} - X^* \rVert^2,
    \label{eq:theo-proof-intermediate-ineq-3}
\end{multline}
\highlight{where the first inequality follows from $\|a+b\| = \|\rho M_+(Z^{(k)} - Z^{(k+1)}\|$ due to \eqref{eq:lemma2_0}, and the second inequality follows by Lipschitz continuity of $\nabla f(X)$ and the fact that} \rev{the largest singular value of $M_+$ equals the induced $2$-norm $\lVert M_{+} \rVert = \max_{x \neq 0} \frac{\lVert M_{+} x \rVert}{\lVert x \rVert}$}. \highlight{By expanding the squares in the LHS of \eqref{eq:theo-proof-intermediate-ineq-3} and using the inequality $\inner[]{a}{b} \geq - \|a\|\|b\|$, we have that}
\begin{multline}\label{eq:lemma2_4}
    \left(1-\frac{1}{\kappa}\right) \left( \lVert M_{-} \left( \beta^{(k+1)} - \beta^*\right) \rVert^2 -2 \lVert M_{-} \left( \beta^{(k+1)} - \beta^*\right) \rVert \lVert \sqrt{2}Dw^{(k+1)} \rVert + \lVert \sqrt{2}Dw^{(k+1)} \rVert^2 \right) \\
    \leq \rho^2 \sigma_\mathrm{max}^2(M_{+}) \lVert Z^{(k+1)}-Z^{(k)} \rVert^2 + (\kappa-1) M_f^2 \lVert X^{(k+1)} - X^* \rVert^2.
\end{multline}
\highlight{We now use the fact that $\lVert M_{-} \left( \beta^{(k+1)} - \beta^* \right) \rVert^2 \geq \sigma_\mathrm{min}^2(M_{-}) \lVert \beta^{(k+1)} - \beta^* \rVert^2$} \rev{(because both $\beta^*$ and $\beta^{(k+1)}$ lie in the column space of $M_{-}^T$)} \highlight{and then multiply the resulting inequality by $\frac{\kappa}{\rho(\kappa-1)\sigma^2_{\mathrm{min}}(M_{-})}$ to obtain eq. \eqref{eq:theo-proof-intermediate-ineq-2}.}

\subsubsection*{\highlight{Step 2.2: A trivial inequality}}

Notice that from eq. (\ref{eq:lemma2_2})
\begin{equation}
    \lVert Z^{(k+1)} - Z^* \rVert^2 = \frac{1}{4} \lVert M_{+}^T \left( X^{(k+1)} - X^*\right)\rVert^2 \leq \frac{1}{4} \sigma_\mathrm{max}^2(M_{+}) \lVert X^{(k+1)}-X^* \rVert^2.
\end{equation}
\rev{By simple multiplication of both sides by $\rho$, we obtain}
\begin{equation}\label{eq:lemma2_5}
    \rho\lVert Z^{(k+1)} - Z^* \rVert^2  \leq \frac{\rho \sigma_\mathrm{max}^2(M_{+})}{4} \lVert X^{(k+1)}-X^* \rVert^2.
\end{equation}

\subsubsection*{\highlight{Step 2.3: Combining the results from steps 2.1 and 2.2 to obtain \eqref{eq:theo-proof-intermediate-ineq-2}}}

We add inequality \eqref{eq:lemma2_5} into (\ref{eq:theo-proof-intermediate-ineq-2}) to obtain
\begin{multline}\label{eq:lemma2_6}
    \frac{\rho \kappa \sigma^2_{\mathrm{max}}(M_{+})}{(\kappa -1) \sigma^2_{\mathrm{min}}(M_{-})} \lVert Z^{(k+1)} - Z^{(k)} \rVert^2 + \left( \frac{\kappa M_f^2}{\rho \sigma^2_{\mathrm{min}}(M_{-})} + \frac{\rho}{4} \sigma^2_{\mathrm{max}}(M_{+})\right) \lVert X^{(k+1)} - X^{*} \rVert^2 \geq \\
    \rho \lVert Z^{(k+1)} - Z^* \rVert^2 +\frac{1}{\rho} \lVert \beta^{(k+1)} - \beta^* \rVert^2 - \frac{2\sqrt{2}}{\sigma^2_{\mathrm{min}}(M_{-})} \lVert M_{-} \left(\beta^{(k+1)} - \beta^* \right) \rVert \lVert Dw^{(k+1)} \rVert +\\
     \frac{2}{\rho \sigma^2_{\mathrm{min}}(M_{-})} \lVert D w^{(k+1)} \rVert^2.
\end{multline}
We now let
\begin{equation}
    \delta = \min \Biggl \{ \frac{(\kappa -1) \sigma^2_{\mathrm{min}}(M_{-})}{\kappa \sigma^2_{\mathrm{max}}(M_{+})}, \frac{m_f}{\frac{\rho}{4} \sigma^2_{\mathrm{max}}(M_{+}) + \frac{\kappa M_f^2}{\rho \sigma^2_{\mathrm{min}}(M_{-})} }\Biggr \}>0,
\end{equation}
\rev{and notice that if $\delta = \frac{(\kappa -1) \sigma^2_{\mathrm{min}}(M_{-})}{\kappa \sigma^2_{\mathrm{max}}(M_{+})}$, then $\delta \left( \frac{\kappa M_f^2}{\rho \sigma^2_{\mathrm{min}}(M_{-})} + \frac{\rho}{4} \sigma^2_{\mathrm{max}}(M_{+})\right) \leq m_f$. Similarly, if $\delta = \frac{m_f}{\frac{\rho}{4} \sigma^2_{\mathrm{max}}(M_{+}) + \frac{\kappa M_f^2}{\rho \sigma^2_{\mathrm{min}}(M_{-})}}$, then $\delta \frac{\rho \kappa \sigma^2_{\mathrm{max}}(M_{+})}{(\kappa -1) \sigma^2_{\mathrm{min}}(M_{-})} \leq \rho$.}
Therefore, by multiplying eq. (\ref{eq:lemma2_6}) with $\delta$, we obtain
\begin{multline}
    \rho \lVert Z^{(k+1)} - Z^{(k)} \rVert^2 +  m_f \lVert X^{(k+1)} - X^{*} \rVert^2 \geq \\
    \rho \delta \lVert Z^{(k+1)} - Z^* \rVert^2 +\frac{\delta}{\rho} \lVert \beta^{(k+1)} - \beta^* \rVert^2 - \frac{2\sqrt{2}\delta}{\sigma^2_{\mathrm{min}}(M_{-})} \lVert M_{-} \left(\beta^{(k+1)} - \beta^* \right) \rVert \lVert Dw^{(k+1)} \rVert +\\
     \frac{2\delta}{\rho \sigma^2_{\mathrm{min}}(M_{-})} \lVert D w^{(k+1)} \rVert^2.
\end{multline}
\rev{We add the positive term $\frac{1}{\rho} \lVert \beta^{(k+1)} - \beta^{(k)} \rVert^2$ to the LHS of the last equation and apply the definition of $\lVert U^{(k+1)} - U^{(k)} \rVert_G^2$ to get}
\begin{multline}\label{eq:lemma2_1001}
    \lVert U^{(k+1)} - U^{(k)} \rVert_G^2 +  m_f \lVert X^{(k+1)} - X^{*} \rVert^2 \geq \\
    \delta \lVert U^{(k+1)} - U^* \rVert_G^2  - \frac{2\sqrt{2}\delta}{\sigma^2_{\mathrm{min}}(M_{-})} \lVert M_{-} \left(\beta^{(k+1)} - \beta^* \right) \rVert \lVert Dw^{(k+1)} \rVert +\\
     \frac{2\delta}{\rho \sigma^2_{\mathrm{min}}(M_{-})} \lVert D w^{(k+1)} \rVert^2.
\end{multline}
\rev{The last equation is a lower bound for the term $\lVert U^{(k)} - U^{(k+1)} \rVert_G^2 + m_f \lVert X^{(k+1)} - X^* \rVert^2$, thus concluding step 2 of the proof.}

\subsection*{\rev{Step 3: Manipulating the lower bound \eqref{eq:lemma2_1001} of $\lVert U^{(k)} - U^{(k+1)} \rVert_G^2 + m_f \lVert X^{(k+1)} - X^* \rVert^2$}}

We combine inequality \eqref{eq:lemma2_1001} with eq. (\ref{eq:lemma2_7}) and obtain
\begin{multline}\label{eq:lemma2_8}
    \lVert U^{(k+1)} - U^* \rVert_G^2 \leq \frac{1}{1+\delta} \lVert U^{(k)} - U^* \rVert_G^2 + \frac{1}{2(1+\delta)} \left(\lVert X^{(k+1)} - X^* \rVert + \lVert \sqrt{2} D w^{(k+1)} \rVert \right)^2 +\\
    \frac{2\sqrt{2}\delta}{(1+\delta)\sigma^2_{\mathrm{min}}(M_{-})} \lVert M_{-} \left(\beta^{(k+1)} - \beta^* \right) \rVert \lVert Dw^{(k+1)} \rVert - \frac{2\delta}{(1+\delta)\rho \sigma^2_{\mathrm{min}}(M_{-})} \lVert D w^{(k+1)} \rVert^2,
\end{multline}
by using $a \leq b\leq c \Rightarrow a \leq c$, where $b$ stands for $\lVert U^{(k+1)} - U^{(k)} \rVert_G^2 +  m_f \lVert X^{(k+1)} - X^{*} \rVert^2$, and some algebraic manipulations.
\vspace{0.3cm}

\subsubsection*{\highlight{Step 3.1: An intermediate trick}}

Eq. (\ref{eq:lemma2_3}) also gives us the upper bound
\begin{equation}
     \lVert X^{(k+1)} - X^* \rVert^2 \leq \frac{1}{m_f} \lVert U^{(k)} - U^* \rVert_G^2 + \frac{1}{m_f}\langle X^{(k+1)} - X^*, -\sqrt{2} D w^{(k+1)}\rangle,
\end{equation}
which, by completing the square, is equivalently written as
\begin{equation}\label{eq:lemma2_13}
    \lVert X^{(k+1)} - X^* + \frac{1}{\sqrt{2}m_f} D w^{(k+1)} \rVert^2 \leq \frac{1}{m_f} \lVert U^{(k)} - U^* \rVert_G^2 + \frac{1}{2m_f^2} \lVert D w^{(k+1)} \rVert^2.
\end{equation}

\noindent We now work to reform the second and third terms on the (RHS) of eq. (\ref{eq:lemma2_8}). 
\vspace{0.3cm}

\subsubsection*{\highlight{Step 3.2: Manipulating the third term on the RHS of eq. (\ref{eq:lemma2_8})}}
\rev{We first manipulate the third term of eq. \eqref{eq:lemma2_8}. From eq. (\ref{eq:lemma2_1}) we have that}
\begin{equation}\label{eq:lemma2_1002}
    \beta^{(k+1)} - \beta^* = \beta^{(k)} - \beta^* + \frac{\rho}{2} M_{-}^T \left( X^{(k+1)} -X^* \right).
\end{equation}
which gives
\begin{multline}\label{eq:lemma2_9}
    \lVert M_{-} \left( \beta^{(k+1)} - \beta^* \right) \rVert \leq \sigma_\mathrm{max}(M_{-}) \lVert \beta^{(k+1)} - \beta^*  \rVert \\
    \leq \sigma_\mathrm{max}(M_{-}) \left( \lVert \beta^{(k)} - \beta^*  \rVert + \frac{\rho \sigma_\mathrm{max}(M_{-}) }{2} \lVert X^{(k+1)} - X^* \rVert\right) \\
    \leq \sigma_\mathrm{max}(M_{-}) \cdot \\
    \left( \lVert \beta^{(k)} - \beta^*  \rVert + \frac{\rho \sigma_\mathrm{max}(M_{-}) }{2} \left( \lVert X^{(k+1)} - X^* + \frac{1}{\sqrt{2}m_f} D w^{(k+1)} \rVert + \lVert \frac{1}{\sqrt{2}m_f} D w^{(k+1)} \rVert \right) \right) \\
    \leq \sigma_\mathrm{max}(M_{-}) \cdot \\
    \left( \lVert \beta^{(k)} - \beta^*  \rVert + \frac{\rho \sigma_\mathrm{max}(M_{-}) }{2} \left( \sqrt{\frac{1}{m_f} \lVert U^{(k)} - U^* \rVert_G^2 + \frac{1}{2m_f^2} \lVert D w^{(k+1)} \rVert^2} + \lVert \frac{1}{\sqrt{2}m_f} D w^{(k+1)} \rVert \right) \right)  \\
    \leq \sigma_\mathrm{max}(M_{-}) \sqrt{\rho} \sqrt{\lVert U^{(k)} - U^*  \rVert_G^2}\ + \\
    \frac{\rho \sigma_\mathrm{max}^2(M_{-})}{2} \left( \sqrt{\frac{1}{m_f} \lVert U^{(k)} - U^* \rVert_G^2 + \frac{1}{2m_f^2} \lVert D w^{(k+1)} \rVert^2} + \lVert \frac{1}{\sqrt{2}m_f} D w^{(k+1)} \rVert\right).
\end{multline}
\rev{The first step in the reasoning above applies the Euclidian norm and uses the properties of the singular values of a matrix. The second step uses the triangle inequality property of the Euclidian norm, eq. \eqref{eq:lemma2_1002}, and the properties of the singular values of a matrix. The third step uses the triangle inequality as well. The fourth step uses eq. \eqref{eq:lemma2_13}. Finally, the fifth step uses the fact that}
\begin{equation}
    \lVert \beta^{(k)} - \beta^* \rVert = \sqrt{\rho }\sqrt{\frac{1}{\rho}\lVert \beta^{(k)} - \beta^* \rVert^2} \leq \sqrt{\rho}\sqrt{\frac{1}{\rho}\lVert \beta^{(k)} - \beta^* \rVert^2 + \rho \lVert Z^{(k)} - Z^* \rVert^2}, 
\end{equation}
\rev{and the definition of $\lVert U^{(k)} - U^* \rVert_G^2$.}
\vspace{0.3cm}

\subsubsection*{\highlight{Step 3.3: Manipulating the second term on the RHS of eq. (\ref{eq:lemma2_8})}}

For the second term of eq. \eqref{eq:lemma2_8}, we get
\begin{multline}\label{eq:lemma2_10}
     \left(\lVert X^{(k+1)} - X^* \rVert + \lVert \sqrt{2} D w^{(k+1)} \rVert \right)^2 \leq \\
     \left(\lVert X^{(k+1)} - X^* + \frac{1}{\sqrt{2}m_f} D w^{(k+1)}\rVert + \underbrace{\lVert \frac{1}{\sqrt{2}m_f} D w^{(k+1)}\rVert + \lVert \sqrt{2} D w^{(k+1)} \rVert}_{\Bar{w}^{(k+1)}\geq0} \right)^2 \leq\\
     \frac{1}{m_f} \lVert U^{(k)} - U^* \rVert_G^2  + 2 \Bar{w}^{(k+1)} \lVert X^{(k+1)} - X^* + \frac{1}{\sqrt{2}m_f} D w^{(k+1)}\rVert + \left(\Bar{w}^{{(k+1)}}\right)^2 + \frac{1}{2m_f^2} \lVert D w^{(k+1)} \rVert^2 \leq\\
     \frac{1}{m_f} \lVert U^{(k)} - U^* \rVert_G^2  + 2 \Bar{w}^{(k+1)} \sqrt{\frac{1}{m_f} \lVert U^{(k)} - U^* \rVert_G^2 + \frac{1}{2m_f^2} \lVert D w^{(k+1)} \rVert^2} + \left(\Bar{w}^{{(k+1)}}\right)^2 + \frac{1}{2m_f^2} \lVert D w^{(k+1)} \rVert^2,
\end{multline}
\rev{by using the triangle inequality of the Euclidian norm, and then simply developing the square and using eq. \eqref{eq:lemma2_13}.}

\subsection*{\highlight{Step 4:} Simplifying the recursive relationship \eqref{eq:lemma2_8} for $\lVert U^{(k+1)} - U^* \rVert_G^2$}
By replacing the third and second term on the RHS of eq. \eqref{eq:lemma2_8}, with their upper bounds, eq. \eqref{eq:lemma2_9} and eq. \eqref{eq:lemma2_10} respectively, and by combining like terms, we obtain
\begin{multline}\label{eq:lemma2_main}
    \lVert U^{(k+1)} - U^* \rVert_G^2 \leq \underbrace{\frac{2m_f+1}{2m_f(1+\delta)}}_{a} \lVert U^{(k)} - U^* \rVert_G^2 + \\
    \underbrace{\frac{1}{2(1+\delta)}}_{b} \left( 2 \Bar{w}^{(k+1)} \sqrt{\frac{1}{m_f} \lVert U^{(k)} - U^* \rVert_G^2 + \frac{1}{2m_f^2} \lVert D w^{(k+1)} \rVert^2} + \left(\Bar{w}^{{(k+1)}}\right)^2 + \frac{1}{2m_f^2} \lVert D w^{(k+1)} \rVert^2\right) +\\
    \underbrace{\frac{2\sqrt{2}\delta}{(1+\delta)\sigma^2_{\mathrm{min}}(M_{-})}}_{c}\lVert Dw^{(k+1)} \rVert \Bigg( \sigma_\mathrm{max}(M_{-}) \sqrt{\rho} \sqrt{\lVert U^{(k)} - U^*  \rVert_G^2}\ + \\
    \underbrace{\frac{\rho \sigma_\mathrm{max}^2(M_{-})}{2}}_{d} \left( \sqrt{\frac{1}{m_f} \lVert U^{(k)} - U^* \rVert_G^2 + \frac{1}{2m_f^2} \lVert D w^{(k+1)} \rVert^2} + \lVert \frac{1}{\sqrt{2}m_f} D w^{(k+1)} \rVert\right) \Bigg)\\
    - \underbrace{\frac{2\delta}{(1+\delta)\rho \sigma^2_{\mathrm{min}}(M_{-})}}_{e} \lVert D w^{(k+1)} \rVert^2.
\end{multline}
We will now use the fact $\sqrt{a+b}\leq \sqrt{a} +\sqrt{b}$ for $a,b \geq 0$. Eq. (\ref{eq:lemma2_main}) then gives us
\begin{multline}\label{eq:lemma2_16}
    \lVert U^{(k+1)} - U^* \rVert_G^2 \leq a \lVert U^{(k)} - U^* \rVert_G^2 + 2\frac{b}{\sqrt{m_f}} \Bar{w}^{(k+1)} \lVert U^{(k)} - U^* \rVert_G + 2\frac{b}{\sqrt{2}m_f} \Bar{w}^{(k+1)} \lVert D w^{(k+1)} \rVert\\
    + b \left(\Bar{w}^{(k+1)}\right)^2 + \frac{b}{2m_f^2}\lVert D w^{(k+1)} \rVert^2 
    +c \sigma_\mathrm{max}(M_{-}) \sqrt{\rho}\lVert D w^{(k+1)} \rVert \lVert U^{(k)} - U^* \rVert_G \\
    + \frac{cd}{\sqrt{m_f}}\lVert D w^{(k+1)} \rVert \lVert U^{(k)} - U^* \rVert_G +\frac{cd}{\sqrt{2}m_f}\lVert D w^{(k+1)} \rVert^2  +\frac{cd}{\sqrt{2}m_f}\lVert D w^{(k+1)} \rVert^2 - e \lVert D w^{(k+1)} \rVert^2 \\
    = a \lVert U^{(k)} - U^* \rVert_G^2  +\\
    \underbrace{\Bigg( 2\frac{b}{\sqrt{m_f}} \Bar{w}^{(k+1)} +c \sigma_\mathrm{max}(M_{-}) \sqrt{\rho}\lVert D w^{(k+1)} \rVert + \frac{cd}{\sqrt{m_f}}\lVert D w^{(k+1)} \rVert \Bigg)}_{y^{(k+1)}}\lVert U^{(k)} - U^* \rVert_G \\
    + \underbrace{\Bigg( \frac{\sqrt{2}b}{m_f} \Bar{w}^{(k+1)} \lVert D w^{(k+1)} \rVert + b \left(\Bar{w}^{(k+1)}\right)^2 + \frac{b}{2m_f^2}\lVert D w^{(k+1)} \rVert^2 
    +\frac{\sqrt{2}cd}{m_f}\lVert D w^{(k+1)} \rVert^2  - e \lVert D w^{(k+1)} \rVert^2\Bigg)}_{r^{(k+1)}}.
\end{multline}
\rev{By simple substitution of the labeled quantities, we have}
\begin{equation}
    \lVert U^{(k+1)} - U^* \rVert_G^2 \leq a \lVert U^{(k)} - U^* \rVert_G^2 + y^{(k+1)} \sqrt{\lVert U^{(k)} - U^* \rVert_G^2} + r^{(k+1)}.
\end{equation}
An important fact is that $y^{(k+1)}, r^{(k+1)}$ only depend on the noise at iteration $(k+1)$, i.e., these terms are a function of $w^{(k+1)}$.

\subsection*{\highlight{Step 5:} Obtaining bounds for the Wasserstein distances}
We now choose the coupling\footnote{Assume that $x$ and $y$ are two random variables with marginal distributions $p(x)$ and $p(y)$ respectively. Then the coupling between $x$ and $y$ is given by the joint distribution $p(x, y)$, such that $p(x) = \sum_y p(x,y)$ and $p(y)=\sum_x p(x,y)$. The coupling is determined by the conditional distribution $p(x\mid y)$ such that $p(x,y) = p(x\mid y) p(y)$.} between the marginal probability distribution of $Z^{(k)}$ and $Z^*$ (which is seen as a random variable with point mass at the value $Z^*$) to be that for which their normed distance squared is minimized. 
We do the same for the coupling between $\beta^{(k+1)}$ and $\beta^*$ (which is also seen as a random variable with point mass at value $\beta^*$). Using Jensen's inequality for concave functions and the independence between $w^{(k+1)}$ and $U^{(k)}$, we get
\begin{multline}\label{eq:lemma2_14}
    W^2_G(\mu_{U^{(k+1)}}, \mu_{U^*}) \leq \mathbb \lVert U^{(k+1)} - U^* \rVert_G^2 \leq a W^2_G(\mu_{U^{(k)}}, \mu_{U^*}) + \\
    \mathbb{E}\left(y^{(k+1)} \right) \sqrt{W^2_G(\mu_{U^{(k)}}, \mu_{U^*}) } + \mathbb{E}\left( r^{(k+1)} \right) \\
    \leq \left(\sqrt{a} W_G(\mu_{U^{(k)}}, \mu_{U^*}) +\frac{\mathbb{E}\left(y^{(k+1)}\right)}{2\sqrt{a}}\right)^2 + \mathbb{E}\left( r^{(k+1)} \right) - \left( \frac{\mathbb{E}\left(y^{(k+1)}\right)}{2\sqrt{a}}\right)^2.
\end{multline}
We can now bound the Wasserstein distance, by using $\sqrt{a+b}\leq \sqrt{a} +\sqrt{b}$ for $a,b \geq 0$,
\begin{equation}
    W_G(\mu_{U^{(k+1)}}, \mu_{U^*}) \leq \sqrt{a} W_G(\mu_{U^{(k)}}, \mu_{U^*}) +\underbrace{\frac{\mathbb{E}\left(y^{(k+1)}\right)}{2\sqrt{a}} + \sqrt{\mid\mathbb{E}\left( r^{(k+1)} \right) - \left( \frac{\mathbb{E}\left(y^{(k+1)}\right)}{2\sqrt{a}}\right)^2 \mid}}_{h^{(k+1)}}.
\end{equation}
\rev{From eq. (\ref{eq:lemma2_13}) and by applying the same reasoning as before, we obtain}
\begin{equation}\label{eq:lemma2_15}
    W^2(\mu_{X^{(k+1)}}, \mu_{X^* - \frac{1}{\sqrt{2}m_f} D w^{(k+1)}}) \leq \frac{1}{m_f} W^2_G(\mu_{U^{(k)}}, \mu_{U^*}) + \frac{1}{2m_f^2}\mathbb{E}\left( \lVert D w^{(k+1)} \rVert^2\right).
\end{equation}
\rev{By using $\sqrt{a+b}\leq \sqrt{a} +\sqrt{b}$ for $a,b \geq 0$, we finally get}
\begin{equation}\label{eq:lemma2_17}
    W(\mu_{X^{(k+1)}}, \mu_{X^* - \frac{1}{\sqrt{2}m_f} D w^{(k+1)}}) \leq \frac{1}{\sqrt{m_f}} W_G(\mu_{U^{(k)}}, \mu_{U^*}) + \frac{1}{\sqrt{2}m_f}\sqrt{\mathbb{E}\left( \lVert D w^{(k+1)} \rVert^2\right)}.
\end{equation}
The triangle inequality of the $2$-Wasserstein distance yields
\begin{equation}
    W(\mu_{X^{(k+1)}}, \boldsymbol{\mu}^*) \leq W(\mu_{X^{(k+1)}}, \mu_{X^* - \frac{1}{\sqrt{2}m_f} D w^{(k+1)}}) + W( \mu_{X^* - \frac{1}{\sqrt{2}m_f} D w^{(k+1)}},\boldsymbol{\mu}^*),
\end{equation}
and by eq. (\ref{eq:lemma2_14},\ref{eq:lemma2_17}), we obtain the final bound
\begin{multline}
    W(\mu_{X^{(k+1)}}, \boldsymbol{\mu}^*) \leq \frac{1}{\sqrt{m_f}} W_G(\mu_{U^{(k)}}, \mu_{U^*}) + \frac{1}{\sqrt{2}m_f}\sqrt{\mathbb{E}\left( \lVert D w^{(k+1)} \rVert^2\right)} +\\
    W( \mu_{X^* - \frac{1}{\sqrt{2}m_f} D w^{(k+1)}},\boldsymbol{\mu}^*),
\end{multline}
where the last two terms in the RHS are constants.

\section{Proof of Theorem 3}\label{sec:proof_theorem1}
The proof of Theorem 3 is based on two steps: i) we telescopically expand the inequality of eq. (17), and ii) we find sufficient conditions for the telescopic sum to be decreasing with increasing iteration number.
\subsection*{Step 1: Telescopically expanding eq. (13)}
We start from eq. (17) of the main body, which is also given below for convenience,
\begin{equation}\label{eq:theorem1_1}
    W_G(\mu_{U^{(k+1)}}, \mu_{U^*}) \leq \sqrt{a} W_G(\mu_{U^{(k)}}, \mu_{U^*}) +\frac{\mathbb{E}\left(y^{(k+1)}\right)}{2\sqrt{a}} + \sqrt{\mid\mathbb{E}\left( r^{(k+1)} \right) - \left( \frac{\mathbb{E}\left(y^{(k+1)}\right)}{2\sqrt{a}}\right)^2 \mid}.
\end{equation}
We first manipulate the last term on the RHS. By the triangle inequality of the absolute value, we have that 
\begin{equation}\sqrt{\mid\mathbb{E}\left( r^{(k+1)} \right) - \left( \frac{\mathbb{E}\left(y^{(k+1)}\right)}{2\sqrt{a}}\right)^2 \mid} \leq \sqrt{\mid\mathbb{E}\left( r^{(k+1)} \right) \mid + \mid \left( \frac{\mathbb{E}\left(y^{(k+1)}\right)}{2\sqrt{a}}\right)^2 \mid}.
\end{equation}
By the property $\sqrt{a+b}\leq \sqrt{a}+\sqrt{b}$ for $a,b \geq 0$, we further have 
\begin{equation}\sqrt{\mid\mathbb{E}\left( r^{(k+1)} \right) \mid + \mid \left( \frac{\mathbb{E}\left(y^{(k+1)}\right)}{2\sqrt{a}}\right)^2 \mid} \leq \sqrt{\mid\mathbb{E}\left( r^{(k+1)} \right) \mid} + \sqrt{\mid \left( \frac{\mathbb{E}\left(y^{(k+1)}\right)}{2\sqrt{a}}\right)^2 \mid}.
\end{equation}
Therefore eq. (17) can be written as
\begin{equation}\label{eq:theorem1_2}
    W_G(\mu_{U^{(k+1)}}, \mu_{U^*}) \leq \sqrt{a} W_G(\mu_{U^{(k)}}, \mu_{U^*}) +\frac{\mathbb{E}\left(y^{(k+1)}\right)}{\sqrt{a}} + \sqrt{\mid\mathbb{E}\left( r^{(k+1)} \right) \mid},
\end{equation}
since $y^{(k+1)} \geq 0$.
We recursively apply the inequality above to obtain
\begin{multline}\label{eq:theorem1_3}
    W_G(\mu_{U^{(k+1)}}, \mu_{U^*}) \leq \left(\sqrt{a}\right)^{k+1} W_G(\mu_{U^{0}}, \mu_{U^*}) +\\
    \sum_{l=1}^{k+1} \left(\sqrt{a}\right)^{k-l}\mathbb{E}\left(y^{(l)} \right) +
    \sum_{l=1}^{k+1} \left(\sqrt{a}\right)^{k+1-l} \sqrt{\mid\mathbb{E}\left( r^{(l)} \right) \mid}.
\end{multline}
From eq. (18-21), we note that for given $a,b,c,d$, and $e$, the terms $\mathbb{E}\left(y^{(l)} \right)$ and $\mathbb{E}\left( r^{(l)} \right)$ are bounded, as they only depend on the noise at iteration $l$ and the Euclidian norm, as well as the square of the Euclidian norm, of a Gaussian random variable are bounded. Assume the terms $\mathbb{E}\left(y^{(l)} \right)$ and $\mathbb{E}\left( r^{(l)} \right)$ are bounded by $Y\geq 0$ and $R \geq 0$ respectively. Then, eq. \eqref{eq:theorem1_3} becomes
\begin{equation}\label{eq:theorem1_4}
    W_G(\mu_{U^{(k+1)}}, \mu_{U^*}) \leq \left(\sqrt{a}\right)^{k+1} W_G(\mu_{U^{0}}, \mu_{U^*}) +
    \sum_{l=1}^{k+1} \left(\sqrt{a}\right)^{k-l}Y +
    \sum_{l=1}^{k+1} \left(\sqrt{a}\right)^{k+1-l} \sqrt{R}.
\end{equation}
Combining eq. \eqref{eq:theorem1_4} with eq. (16) in the main body, we obtain
\begin{multline}
        W(\mu_{X^{(k+1)}}, \boldsymbol{\mu}^*) \leq \frac{1}{\sqrt{m_f}} \left(\sqrt{a}\right)^{k} W_G(\mu_{U^{0}}, \mu_{U^*}) +\\
    \frac{1}{\sqrt{m_f}}\sum_{l=1}^{k} \left(\sqrt{a}\right)^{k-l}Y +
    \frac{1}{\sqrt{m_f}}\sum_{l=1}^{k} \left(\sqrt{a}\right)^{k-l} \sqrt{R} + \\
        \frac{1}{\sqrt{2}m_f}\sqrt{\mathbb{E}\left( \lVert D w^{(k+1)} \rVert^2\right)} +
    W( \mu_{X^* - \frac{1}{\sqrt{2}m_f} D w^{(k+1)}},\boldsymbol{\mu}^*),
\end{multline}
where the last two terms on the RHS are constants. Assuming that $a < 1$, then, since $a>0$, we have $\sum_{l=0}^\infty a^k = \frac{1}{1-a}$, which leads to the inequality
\begin{multline}\label{eq:theorem1_5}
        W(\mu_{X^{(k+1)}}, \boldsymbol{\mu}^*) \leq \frac{1}{\sqrt{m_f}} \left(\sqrt{a}\right)^{k} W_G(\mu_{U^{0}}, \mu_{U^*}) +\\
    \frac{1}{\sqrt{a m_f}}\frac{Y}{1-\sqrt{a}} +
    \frac{1}{\sqrt{m_f}}\frac{\sqrt{R}}{1-\sqrt{a}}+ \\
        \frac{1}{\sqrt{2}m_f}\sqrt{\mathbb{E}\left( \lVert D w^{(k+1)} \rVert^2\right)} +
    W( \mu_{X^* - \frac{1}{\sqrt{2}m_f} D w^{(k+1)}},\boldsymbol{\mu}^*).
\end{multline}
From the last equation, we observe that if $a < 1$, the upper bound for the Wasserstein distance $W(\mu_{X^{(k+1)}}, \boldsymbol{\mu}^*)$ decreases as the iteration number increases. This is because the first term on the RHS of eq. \eqref{eq:theorem1_5} decreases as $k$ increases, while the remaining terms on the RHS of eq. \eqref{eq:theorem1_5} are constants for all iterations $k \geq 0$.
\subsection*{Step 2: Finding sufficient conditions for $a<1$}
We start with the definition of $\delta$ from eq. (20) of the main body. We observe that $\delta$ is a function of both $\kappa$ and $\rho$, given by
\begin{equation}\label{eq:delta}
    \delta(\kappa, \rho) = \min \Biggl \{ \frac{(\kappa -1) \sigma^2_{\mathrm{min}}(M_{-})}{\kappa \sigma^2_{\mathrm{max}}(M_{+})}, \frac{m_f}{\frac{\rho}{4} \sigma^2_{\mathrm{max}}(M_{+}) + \frac{\kappa M_f^2}{\rho \sigma^2_{\mathrm{min}}(M_{-})} }\Biggr \}.
\end{equation}
We observe that only the second argument in the definition of $\delta(\kappa, \rho)$ depends on $\rho$. Assuming a $\kappa>1$ is selected, then the value 
\begin{equation}\label{eq:theorem1_6}
    \rho(\kappa) = \dfrac{2\kappa^{\frac{1}{2}}M_f}{\sigma_\mathrm{min}(M_{-})\sigma_\mathrm{max}(M_{+})}
\end{equation}
maximizes the second term in the $\min$ of eq. \eqref{eq:delta} and therefore $\delta$ for the given $\kappa$. In other words, $\rho(\kappa)$ from eq. \eqref{eq:theorem1_6} maximizes $\delta(\kappa, \rho)$ for the given $\kappa$. The maximum $\delta$ as a function of $\kappa$, termed $\delta_\mathrm{max}(\kappa)$, is hence given by
\begin{equation}\label{eq:theorem1_7}
    \delta_\mathrm{max}(\kappa) = \min \Biggl \{ \frac{(\kappa -1) \sigma^2_{\mathrm{min}}(M_{-})}{\kappa \sigma^2_{\mathrm{max}}(M_{+})}, \frac{m_f \sigma_\mathrm{min}(M_{-})}{\kappa^{\frac{1}{2}}M_f \sigma_\mathrm{max}(M_{+})}\Biggr \}.
\end{equation}
In order to find the maximum $\delta(\kappa, \rho)$ we need to maximize $\delta_\mathrm{max}(\kappa)$ in eq. \eqref{eq:theorem1_7} with respect to $\kappa$. We observe that the first term in eq. \eqref{eq:theorem1_7} is monotonically increasing as a function $\kappa$, while the second term in eq. \eqref{eq:theorem1_7} is monotonically decreasing as a function $\kappa$. Therefore, to obtain the maximum $\delta$, we choose $\kappa$ such that the two terms are equal. Such a $\kappa$ exists and comes out to be
\begin{equation}\label{eq:theorem1_8}
    \kappa = 1+ \frac{1}{2}\sqrt{4 \frac{\tau_G^2}{\tau_f^2} + \frac{\tau_G^4}{\tau_f^4}} + \frac{\tau_G^2}{2\tau_f^2} > 1,
\end{equation}
where 
\begin{equation}
    \tau_G = \frac{\sigma_\mathrm{max}(M_{+})}{\sigma_\mathrm{min}(M_{-})},\ \tau_f = \frac{M_f}{m_f}.
\end{equation}
By plugging in $\kappa$ from eq. \eqref{eq:theorem1_8} to eq. \eqref{eq:theorem1_7}, we get the maximum possible value of $\delta(\kappa, \rho)$ to be
\begin{equation}
    \delta_\mathrm{max} = \frac{1}{2\tau_f}\sqrt{\frac{1}{\tau_f^2} + \frac{4}{\tau_G^2}} - \frac{1}{2\tau_f^2}.
\end{equation}
We turn our attention to the definition of $a$ in eq. (21). $a <1$ if and only if $2m_f \delta >1$. Therefore, for convergence we need
\begin{equation}
    \delta_\mathrm{max} > \frac{1}{2m_f}.
\end{equation}
A sufficient condition for convergence is thus the following
\begin{equation}
    \frac{m_f}{M_f}\sqrt{\frac{m_f^2}{M_f^2} + \frac{4\sigma_\mathrm{min}^2(M_{-})}{\sigma_\mathrm{max}^2(M_{+})}} - \frac{m_f^2}{M_f^2} > \frac{1}{m_f}.
\end{equation}
The last equation can equivalently be written as
\begin{equation}
    \tau_f^{-1}\sqrt{\tau_f^{-2}+4\tau_G^{-2}}-\tau_f^{-2} > m_f^{-1}.
\end{equation}

\bibliography{sample}

\end{document}